\newtheorem{problem}{Problem}
\newtheorem{define}{Definition}
\newtheorem{corollary}{Corollary}
\newtheorem{lemma}{Lemma}
\def\a{{\bm{a}}}
\def\b{{\bm{b}}}
\def\bb{\tilde{\b}}
\def\c{{\bm{c}}}
\def\d{{\bm{d}}}
\def\q{{\bm{q}}}
\def\x{{\bm{x}}}
\def\z{{\bm{z}}}
\def\w{{\bm{w}}}
\def\th{{\boldsymbol{\theta}}}
\def\bphi{{\boldsymbol{\phi}}}
\def\bxi{{\boldsymbol{\xi}}}
\def\bmu{{\boldsymbol{\mu}}}
\def\vec{{\mathrm{vec}}}
\begin{document}
%
\title{Domain Adaptation with $L_2$ constraints
for classifying images from different endoscope systems}
%
%
%

\author{Toru~Tamaki,
Shoji~Sonoyama,
Takio~Kurita,
Tsubasa~Hirakawa,
Bisser~Raytchev,
Kazufumi~Kaneda,
Tetsushi~Koide,
Shigeto~Yoshida,
Hiroshi~Mieno,
Shinji~Tanaka,
and~Kazuaki~Chayama
}

%
%

\markboth{Journal of \LaTeX\ Class Files,~Vol.~14, No.~8, August~2015}%
{Shell \MakeLowercase{\textit{et al.}}: Bare Demo of IEEEtran.cls for IEEE Journals}
%



\maketitle

\begin{abstract}
This paper proposes a method for domain adaptation
that extends the maximum margin domain transfer (MMDT) proposed by
Hoffman et al., by introducing $L_2$ distance constraints between
samples of different domains; thus, our method is denoted as MMDTL2.
Motivated by the differences between the images taken by
narrow band imaging (NBI) endoscopic devices,
we utilize different NBI devices as different domains
and estimate the transformations between samples of different domains,
i.e., image samples taken by different NBI endoscope systems.
We first formulate the problem in the primal form,
and then derive the dual form with much lesser computational costs as compared to the naive approach. 
From our experimental results using NBI image datasets from two different NBI endoscopic devices, 
we find that MMDTL2 is better than MMDT and also support vector machines without adaptation, especially
when NBI image features are high-dimensional and the per-class training samples are greater than 20.
\end{abstract}

\begin{IEEEkeywords}
Domain adaptation; Dual formulation; Kernels; NBI endoscopy
\end{IEEEkeywords}

%
\IEEEpeerreviewmaketitle

\section{Introduction}

In many hospitals, endoscopic examinations (i.e., colonoscopies) using narrow band imaging (NBI) systems are
widely performed to diagnose colorectal cancer \cite{Tanaka2006},
which is a major cause of cancer deaths worldwide \cite{CancerResearchUK2}.
During examinations, endoscopists observe and examine a polyp 
based on its visual appearance, including via NBI magnification findings \cite{Kanao2009,Oba2010}, as shown in Figure \ref{fig:nbi_magnification}.
To support proper diagnosis during examinations,
a computer-aided diagnostic system based on the textural appearance of polyps would be helpful;
thus, numerous patch-based classification methods for endoscopic images have been proposed
\cite{Hafner2010a,Hafner2010b,Kwitt2010,Gross2009,Stehle2009,Tischendorf2010,Tamaki2013}.

This paper focuses on the inconsistencies between training and test images.
As with other frequently used machine learning approaches, training classifiers assumes that
the distribution of features extracted from both training and test image datasets are the same;
however, different endoscope systems may be used to collect training and test datasets, causing such an assumption to be violated.
Further, given the rapid development of medical devices (i.e., endoscopies in this case),
hospitals can introduce new endoscopes after training images have already been taken.
In addition, classifiers may be trained with a training dataset collected by a certain type of endoscope in one hospital,
while another hospital might use the same classifiers for images taken by a different endoscope.
In general, such inconsistencies lead to the deterioration of classification performance;
hence, collecting new images for a new training dataset may be necessary or is at least preferred.
However, this is not the case with medical images. It is impractical to collect enough sets of images for all types and manufacturers of endoscopes.
Active learning \cite{Settles2012} deals with small samples;
however, this method is not helpful because it selects small samples from unlabeled large training samples.

Figure \ref{fig:appearance} shows an example of differences between textures captured by different endoscope systems.
More specifically, the images shown in Figures \ref{fig:appearance}(a) and \ref{fig:appearance}(b) are the same scene from a printed sheet of a colorectal polyp image
taken by different endoscope systems at approximately the same distance to the sheet from the endoscopes.
Even for the same manufacture (e.g., Olympus) and the same modality (e.g., NBI),
images may differ in terms of factors such as resolution, image quality, sharpness, brightness, and viewing angle.
These differences may impact classification performance.

To address this problem,
Sonoyama et al. \cite{Sonoyama2015a} proposed a method based on transfer learning \cite{pan2010survey,raina2007self,dai2007boosting,silver2005nips}
to estimate a transformation matrix between feature vectors of training and test datasets captured by different (i.e., old and new) devices.
In this prior study, we formulated the problem as a constraint optimization problem and developed an algorithm to estimate a linear transformation;
however, a key limitation is that corresponding datasets are required, i.e., each test image (i.e., taken by a new device) must have a corresponding training image (i.e., taken by an old device). Further, these images must capture the same polyp to properly estimate the linear transformation.
These restrictions are rather strong, causing our system to be somewhat impractical.

\begin{figure}[t]
  \centering
  \includegraphics[width=\linewidth]{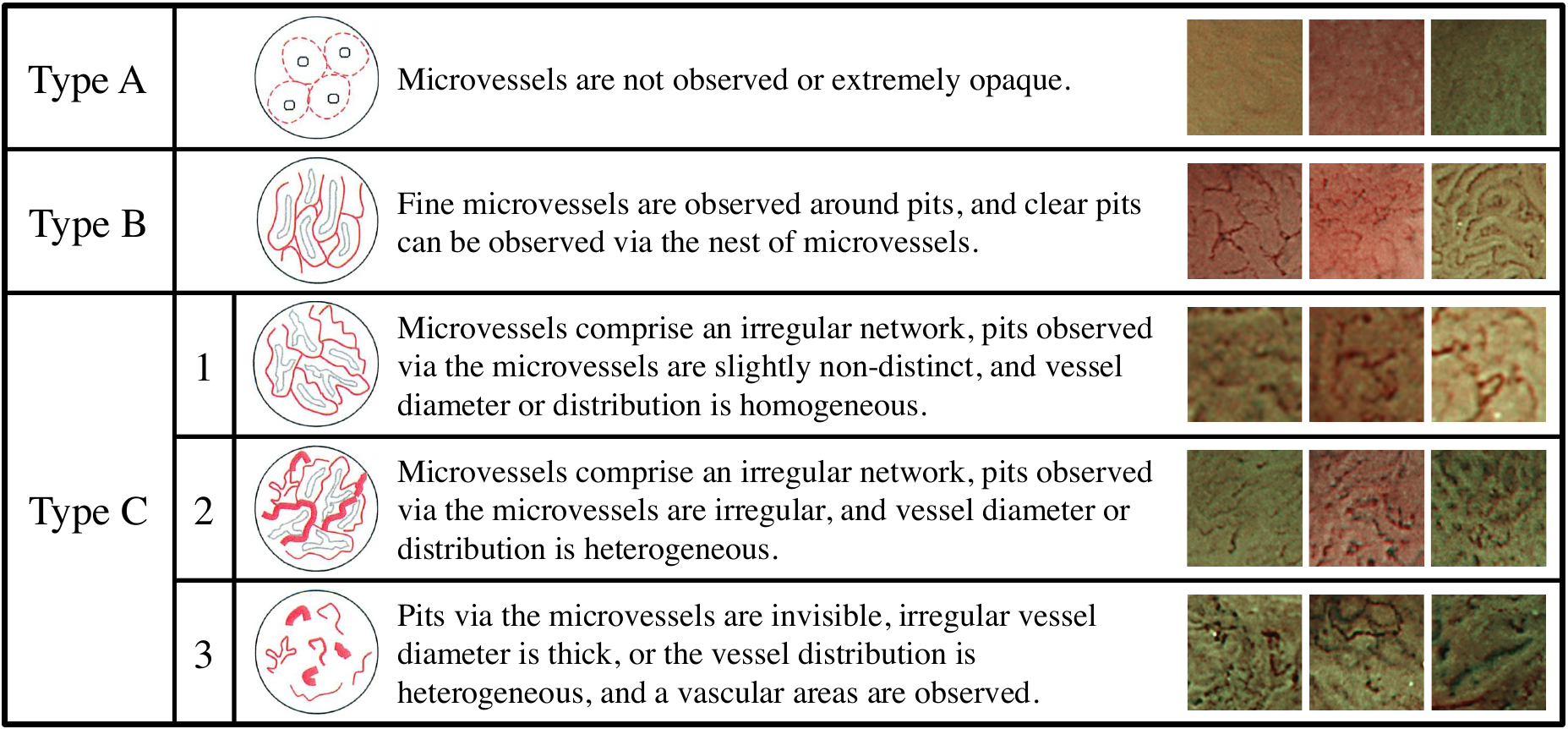}
  \caption{NBI magnification findings \cite{Kanao2009}.}
  \label{fig:nbi_magnification}
\end{figure}

\begin{figure}[t]
 \begin{minipage}{0.45\linewidth}
  \centering
   \includegraphics[width=\linewidth]{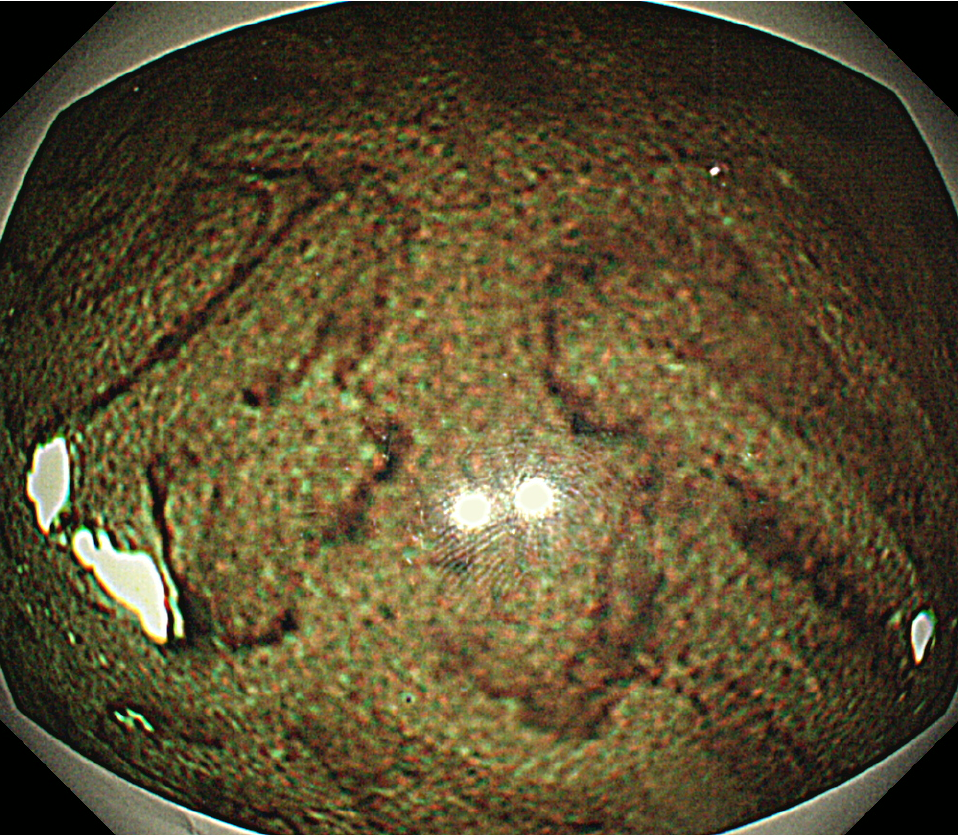}
  (a)\\
 \end{minipage}
 \hfill
 \begin{minipage}{0.45\linewidth}
  \centering
   \includegraphics[width=\linewidth]{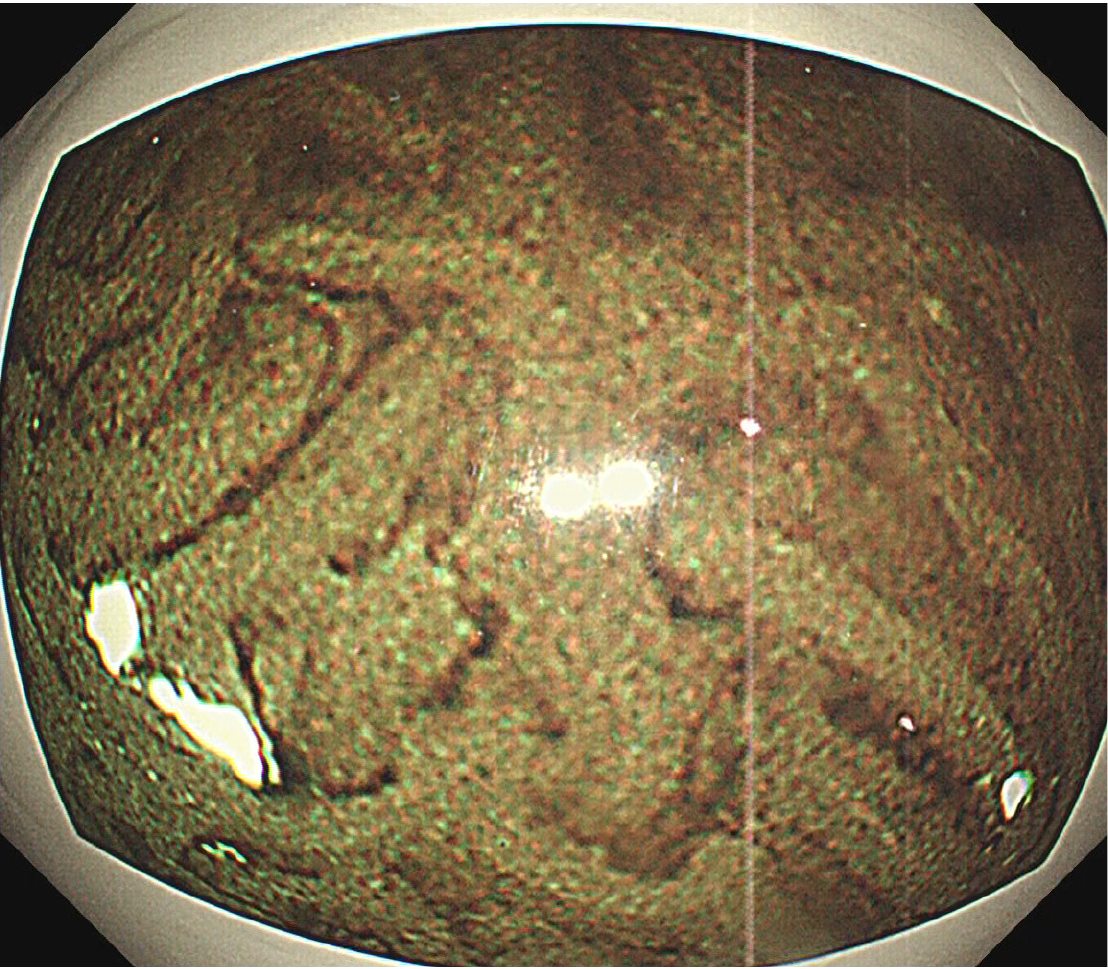}
  (b)\\
 \end{minipage}
 \caption{Example showing differences in appearance of images captured by different endoscope systems:
(a) an image taken by an older system
(i.e., video system center: OLYMPUS EVIS LUCERA CV-260 \cite{CV-260};
endoscope: OLYMPUS EVIS LUCERA COLONOVIDEOSCOPE CF-H260AZI \cite{CF-H260AZI});
(b) an image of the same scene taken by a newer system
(i.e., video system center: OLYMPUS EVIS LUCERA ELITE CV-290 \cite{CV-290};
endoscope: OLYMPUS CF-H260AZI \cite{CF-H260AZI}).
}
 \label{fig:appearance}
\end{figure}

Therefore, this paper proposes an improved method for a task that does not require image-by-image correspondences between training and test datasets.
More specifically, 
we extend the transfer learning method proposed by Hoffman et al. \cite{Hoffman2013a,Hoffman2014a}, called maximum margin domain transfer (MMDT).
Their approach was a domain adaptation technique to handle the domain shift problem, in which
distributions of classes (or categories) in one domain, called the {\em source}, change in another domain, called the {\em target}. This situation occurs in various applications, and hence, domain adaptation and transfer learning have already been widely studied.

Compared to previous studies, MMDT had the following advantages: (1) applicability to multiclass problems with one-vs-rest support vector machines (SVMs); (2) the ability to use different feature dimensions in the source and target domains; (3) the ability to handle unlabeled target samples by estimating global class-independent affine transformation matrix $W$; and (4) scalability to the number of constraints, i.e., MMDT solves $K L_s$ constraints as compared to $L_t L_s$ in the previous studies,
where $K$ is the number of classes and $L_s$ and $L_t$ are the feature dimensions of the source and target domains.

In this paper%
\footnote{A conference version of this paper was presented \cite{Sonoyama2016a}.
This paper extends that version with more rigorous derivations,
the compact form of the dual formulation,
and the kernelization of the method, as well as experiments from different aspects.},
we therefore propose a non-trivial extension to MMDT
for handling inconsistencies between NBI endoscopic devices.
We summarize the contributions of this paper as follows.
\begin{itemize}
\item
First, we add $L_2$ distance constraints to MMDT; our method is thus called MMDTL2. The original formulation of MMDT uses the Frobenius norm of transformation $W$ as a regularizer; however, pulling transformation $W$ into a zero matrix is not intuitive and might not have a good effect on transferring samples. Other regularizers were discussed by \cite{Hoffman2014a}, e.g., an identity matrix when $L_s = L_t$, but no examples were given for cases of $L_s \neq L_t$. For the latter cases, the target samples in one category should be transformed into the same category as that of the source domain. Therefore, we propose using the $L_2$ distances between the source and transformed target samples to regularize $W$.
\item
Second, we explicitly formulate MMDTL2 as a quadratic programming (QP) problem. In \cite{Hoffman2013a,Hoffman2014a}, the MMDT problem was described, but not in QP form. In this paper, we explicitly formulate MMDTL2 in the standard 
primal QP form, which includes MMDT as a special case (i.e., where no $L_2$ constraints are used).
\item
Third, we derive the dual QP problem of MMDTL2, where the QP form mentioned above is the primal form. 
The computational costs of MMDT and the primal QP form of MMDTL2 can be very large for samples with large dimensions because:
\begin{itemize}
  \item for MMDT, affine transformation matrix $W$, which is of size $L_s \times L_t$ (where $L_s$ and $L_t$ are the feature dimensions of the source and target domains), can be very large;
  \item for the primal QP form of MMDTL2, a matrix of size $L_s (L_t + 1) \times L_s (L_t + 1)$ appears in the computation, which is much larger than $W$.
\end{itemize}
In contrast, our derived dual QP form of MMDTL2 is more scalable because it involves a $KM \times KM$ matrix, 
where $M$ is the number of target samples and $K$ is the number of classes. 
Typically, $M$ is limited or much smaller than the number of source samples, which is common and therefore reasonable for domain adaptation problems, including our problem of NBI device inconsistency.
\item
Finally, we show that the primal QP solution can be converted from the dual QP solution with much lower computational cost. With our derived formula, this conversion needs matrices of size $KM \times KM$ at most, while without elaboration, a matrix of size $L_s (L_t + 1) \times L_s (L_t + 1)$ appears in the conversion.
\item
In addition, we derive a kernelization of the dual QP formulation, which enables us to use a nonlinear transformation as $W$ for better performance.
\end{itemize}

Note that our dual form is different from the dual form derived by Rodner et al. \cite{Rodner2013}. Their motivation was to make MMDT scalable in terms of the number of target samples, because in their study, they attempted to adapt large datasets such as ImageNet. Therefore, their dual form still suffers from the large feature dimensions of the source and target samples. In contrast,
our formulation is scalable in terms of feature dimensions.

The rest of the paper is organized as follows. We formulate problems of MMDT and MMDTL2 in Section 2, and then derive the primal form in Section 3. In Section 4, we show the dual form, and in Section 5, we obtain the primal solution from the dual solution. In Section 6, we present our experimental results using datasets of actual NBI endoscopic images. Finally, in Section 7, we conclude this paper and provide avenues for future work.

\section{Problem formulation}

In this section, we introduce the problems of MMDT \cite{Hoffman2014a}
and our proposed MMDTL2.

\begin{problem}[MMDT]\label{prob:mmdt}

Suppose we are given a training set
$\chi^s = \{ \x_n^s, y_n^s \}_{n=1}^N \subset \mathbb{R}^{L_s}\times\{ 1,2,\ldots,K \}$
in the source domain and another set
$\chi^t = \{ \x_m^t, y_m^t \}_{m=1}^M \subset \mathbb{R}^{L_t}\times\{ 1,2,\ldots,K \}$
in the target domain for a $K$-class classification problem.

MMDT solves the following optimization problem:
\begin{align}
\min_{W, \hat\Theta}
\frac{1}{2} \| W \|_F^2 +
\frac{1}{2} \| \hat\Theta \|_F^2 +
c_t \mathcal{L} (W, \hat\Theta, \chi^t ) +
c_s \mathcal{L} (\hat\Theta, \chi^s ),
\end{align}
where $c_t$ and $c_s$ are weights and
\begin{align}
\mathcal{L} (W, \hat\Theta, \chi^t ) 
&=
\sum_{k,m}
\max \left(
0, 1 - y_{km}^t \hat\th_k^T 
  \begin{pmatrix}
  W \hat\x_m^t \\
  1
  \end{pmatrix}
   \right)
\end{align}
and
\begin{align}
\mathcal{L} (\hat\Theta, \chi^s )
&=
\sum_{k,n}
\max \left(
0, 1 - y_{kn}^s \hat\th_k^T 
   \hat\x_n^s
   \right)
\end{align}
are hinge loss functions.
Here, $\hat\th_k \in \mathbb{R}^{L_s+1}$ is an SVM hyperplane parameter
(including weights $\th_k \in \mathbb{R}^{L_s}$ and bias $b_k \in \mathbb{R}$,
i.e., $\hat\th_k = (\th_k^T, b_k )^T)$ for the $k$th class
stacked into a matrix $\hat\Theta = (\hat\th_1, \hat\th_2, \ldots, \hat\th_K) $,
$y_{km}^t = 2\delta(y_m^t, k)-1 \in \{-1,1\}$ is a label in terms of the $k$th hyperplane, and $c_t$ and $c_s$ are weights.

Note that $\hat{\x}$ denotes an augmented vector with 1 as its last element, i.e.,
$\hat\x = 
\begin{pmatrix}
\x \\ 1
\end{pmatrix}
$.

\end{problem}

In this problem, we simultaneously obtain $K$ SVM classifiers and transformation $W \in \mathbb{R}^{L_s \times L_t}$. One-vs-all SVMs are used for multiclass classification; thus, $K$ hyperplane parameters $\hat\th_k$ are obtained in the source domain. Target samples $\x^t_m$ are transformed by $W$ from the target domain to the source domain, and then the loss function causes them to be classified by the SVMs.

Because this problem is non-convex, an alternating optimization approach was used in \cite{Hoffman2014a}.

\begin{problem}[MMDT with iteration]\label{prob:mmdt_alt}

MMDT solves problem \ref{prob:mmdt} by iteratively solving subproblems
\begin{align}
&
\min_{W}
\frac{1}{2} \| W \|_F^2 +
c_t \mathcal{L} (W, \hat\Theta, \chi^t ),
\end{align}
and
\begin{align}
& \min_{\hat\Theta}
\frac{1}{2} \| \hat\Theta \|_F^2 +
c_t \mathcal{L} (W, \hat\Theta, \chi^t ) +
c_s \mathcal{L} (\hat\Theta, \chi^s ),
\label{eq:svm_subproblem_mmdt}
\end{align}
initializing $\hat\Theta$ with
\begin{align}
\mathrm{arg}\min_{\hat\Theta}
\frac{1}{2} \| \hat\Theta \|_F^2 +
c_s \mathcal{L} (\hat\Theta, \chi^s ).
\end{align}

\end{problem}

As noted in the Introduction, the use of Frobenius norm $\| W \|_F^2$
is not intuitive for the transformation matrix. Further, it might not be a good choice for small values of $c_t$ because the obtained solution is pulled toward a zero matrix; however, the use of large values of $c_t$ impacts the SVM subproblem (\ref{eq:svm_subproblem_mmdt}) because C-SVM solvers are known to be unstable for large values of parameter $C$.

In this paper, we therefore propose the following problem, which we call MMDTL2. MMDTL2 incorporates additional constraints of $L_2$ distances to pull target samples to source samples of the same category.

\begin{problem}[MMDTL2]\label{prob:mmdtL2}

MMDT with $L_2$ constraints solves problem \ref{prob:mmdt} by iteratively solving subproblems
\begin{align}
&
\min_{W}
\frac{1}{2} c_f \| W \|_F^2 +
c_t \mathcal{L} (W, \hat\Theta, \chi^t ) +
c_d \mathcal{L} (W, \chi^s, \chi^t ),
\label{eq:w_subproblem_mmdtl2}
\end{align}
and
\begin{align}
& \min_{\hat\Theta}
\frac{1}{2} \| \hat\Theta \|_F^2 +
c_t \mathcal{L} (W, \hat\Theta, \chi^t ) +
c_s \mathcal{L} (\hat\Theta, \chi^s )
\label{eq:svm_subproblem_mmdtl2}
\end{align}
with the same initialization as problem \ref{prob:mmdt_alt},
where
\begin{align}
c_d \mathcal{L} (W, \chi^s, \chi^t )
= \frac{1}{2}   \sum_{m=1}^M \sum_{n=1}^N s_{nm} \| W \hat\x_m^t - \x_n^s \|_2^2,
\end{align}
where $s_{nm} = c_d \delta(y_n^s, y_m^t)$ is a weight between samples $\x_m^t\in \mathbb{R}^{L_t}$ and $\x_n^s \in \mathbb{R}^{L_s}$
and $c_d$ is the balance weight.

MMDTL2 reduces to MMDT when $c_f = 1$ and $c_d = 0$.

\end{problem}

We add weight $c_f$ to the first term of Eq. (\ref{eq:w_subproblem_mmdtl2}) because $c_t$ is used for both Eqs. (\ref{eq:w_subproblem_mmdtl2}) and (\ref{eq:svm_subproblem_mmdtl2}). In both equations, $c_t$ modifies the balance between the first and second terms. Therefore, without $c_f$, the first terms of Eqs. (\ref{eq:w_subproblem_mmdtl2}) and (\ref{eq:svm_subproblem_mmdtl2}) have the same weight, which might not be suitable in general.
We will investigate the effect of $c_f$ in section 4.4.

The SVM subproblem (\ref{eq:svm_subproblem_mmdtl2}) can be solved by common SVM solvers, as is done for (\ref{eq:svm_subproblem_mmdt}) by MMDT.
Therefore, in the following sections, we focus on deriving the primal and dual forms of subproblem (\ref{eq:w_subproblem_mmdtl2}) as standard QP problems.

\section{Proposed method}

In this section, we highlight the main results only.
Full derivations and proofs are given in the appendix.

\subsection{Primal problem}

First, we rephrase subproblem (\ref{eq:w_subproblem_mmdtl2})
with inequality constraints instead of loss functions.

\begin{problem}[Estimation of $W$]

We want to find $W \in \mathbb{R}^{L_s \times (L_t+1)}$ that minimizes
the following objective function:
\begin{align}
  \min_{W, \{ \xi_{km}^t \} } &
  \frac{1}{2} c_f \| W \|_F^2
  +
  c_T \sum_{k=1}^K \sum_{m=1}^M \xi_{km}^t \notag\\
  & + \frac{1}{2}   \sum_{m=1}^M \sum_{n=1}^N s_{nm} \| W \hat\x_m^t - \x_n^s \|_2^2
  \\
  \text{s.t.} & 
  \quad
  \xi_{km}^t \ge 0,
  \quad
  y_{km}^t \hat\th_k^T 
  \begin{pmatrix}
  W \hat\x_m^t \\
  1
  \end{pmatrix}
   -1 + \xi_{km}^t \ge 0.
\end{align}

\end{problem}

\subsection{Primal QP problem}

The problem can be written in the form of a canonical QP problem.

\begin{lemma}

The problem can be written as
\begin{align}
  & \min_{\w, \bxi }
  \frac{1}{2} (\w^T, \bxi^T) 
  \begin{pmatrix}
  V & 0 \\
  0 & 0
  \end{pmatrix}
  \begin{pmatrix}
  \w \\ \bxi
  \end{pmatrix}
  +  (-\q^T, c_T \bm{1}_{KM}^T) \begin{pmatrix}
  \w \\ \bxi
  \end{pmatrix} + \frac{1}{2}  s
    \\
  & \text{s.t.} 
  \quad
  \begin{pmatrix}
  0 & I_{KM} \\
  Y \Phi^T & I_{KM}
  \end{pmatrix}
  \begin{pmatrix}
    \w \\ \bxi
  \end{pmatrix} 
   \ge  
  \begin{pmatrix}
    \bm{0} \\
    \bm{1}_{KM} -  Y\bb    
  \end{pmatrix},
\end{align}
where variables are defined in the proof.
\end{lemma}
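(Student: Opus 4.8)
The plan is to eliminate the matrix unknown $W$ in favor of its vectorization $\w = \vec(W) \in \mathbb{R}^{L_s(L_t+1)}$, so that the objective and constraints become a standard QP in the stacked variable $(\w^T, \bxi^T)^T$, where $\bxi$ collects the slacks $\xi_{km}^t$ in one fixed $(k,m)$ ordering. The single tool that makes this work is the identity $\vec(ABC) = (C^T \otimes A)\vec(B)$. Applying it with $A = I_{L_s}$, $B = W$, $C = \hat\x_m^t$ gives $W\hat\x_m^t = A_m \w$ with $A_m = \hat\x_m^{t\,T} \otimes I_{L_s}$, and applying it to the scalar $\th_k^T W \hat\x_m^t$ gives $\th_k^T W \hat\x_m^t = \phi_{km}^T \w$ with $\phi_{km} = \hat\x_m^t \otimes \th_k$. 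Both are linear in $\w$, which is exactly what a QP requires, and I would also record $\|W\|_F^2 = \w^T\w$.

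Next I would assemble the quadratic objective. Expanding the distance term as $\|A_m\w - \x_n^s\|_2^2 = \w^T A_m^T A_m \w - 2\,\x_n^{s\,T} A_m \w + \|\x_n^s\|_2^2$, summing against the weights $s_{nm}$, and adding the regularizer $\frac{1}{2}c_f \w^T\w$ collects into three pieces: the quadratic matrix $V = c_f I + \sum_{m,n} s_{nm} A_m^T A_m$, the linear vector $\q = \sum_{m,n} s_{nm} A_m^T \x_n^s$ (entering as $-\q^T\w$), and the constant $s = \sum_{m,n} s_{nm}\|\x_n^s\|_2^2$ (entering as $\frac{1}{2}s$). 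The slack penalty is simply $c_T \sum_{k,m}\xi_{km}^t = c_T\,\bm{1}_{KM}^T\bxi$. Since $\bxi$ never enters the quadratic form, the Hessian is the block $\mathrm{diag}(V,0)$ shown in the statement, and the linear coefficient vector is $(-\q^T, c_T\bm{1}_{KM}^T)$.

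Then I would convert the two families of inequalities into the block constraint matrix. The nonnegativity $\xi_{km}^t \ge 0$ becomes $I_{KM}\bxi \ge \bm{0}$, giving the top row $(0,\ I_{KM})$. For the margin constraint I would use that the inner product $\hat\th_k^T(W\hat\x_m^t;1)$ expands to $\th_k^T W\hat\x_m^t + b_k = \phi_{km}^T\w + b_k$, so that $y_{km}^t(\phi_{km}^T\w + b_k) - 1 + \xi_{km}^t \ge 0$ rearranges to $y_{km}^t\phi_{km}^T\w + \xi_{km}^t \ge 1 - y_{km}^t b_k$. Stacking over the same $(k,m)$ ordering with $\Phi$ the matrix whose columns are $\phi_{km}$, $Y = \mathrm{diag}(y_{km}^t)$, and $\bb$ the vector with entries $b_k$, this reads $Y\Phi^T\w + I_{KM}\bxi \ge \bm{1}_{KM} - Y\bb$, which is precisely the bottom row of the claimed system.

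The main obstacle is not any single estimate but the bookkeeping: I must fix one $(k,m)$ enumeration and use it consistently for $\bxi$, for the rows of $\Phi^T$, for the diagonal of $Y$, and for $\bb$, while keeping the Kronecker factors in the correct order so that $A_m$ and $\phi_{km}$ act on the same $\w$. It is easy to transpose a Kronecker product or to misalign $\th_k$ (length $L_s$) against $\hat\x_m^t$ (length $L_t+1$); verifying that $A_m$ is $L_s \times L_s(L_t+1)$ and that $\phi_{km} \in \mathbb{R}^{L_s(L_t+1)}$ gives a clean dimension check. Once the indexing is pinned down, matching the assembled objective and constraints term by term against the statement is routine.
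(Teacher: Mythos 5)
Your proposal is correct and takes essentially the same approach as the paper: vectorize $W$, linearize $W\hat\x_m^t$ and $\th_k^T W\hat\x_m^t$ through a Kronecker/vec identity, and assemble $V$, $\q$, $s$, $\Phi$, $Y$, $\bb$ by stacking over a fixed $(k,m)$ ordering. The only cosmetic difference is your use of the standard column-major $\mathrm{vec}$, which reverses the order of the Kronecker factors (e.g.\ your $V$ becomes $(c_f I_{L_t+1}+X^t S_M (X^t)^T)\otimes I_{L_s}$ rather than the paper's $I_{L_s}\otimes(c_f I_{L_t+1}+X^t S_M (X^t)^T)$ under its row-major $\mathrm{vec}$); this is just a fixed permutation of the coordinates of $\w$ and changes nothing, since the lemma leaves the variables to be defined in the proof.
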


This primal QP problem involves very large matrices that are impractical to compute.
More precisely, $V$ is a matrix of size $L_s (L_t+1) \times L_s (L_t + 1)$, which can be very large when the dimensions of features ($L_s$ and $L_t$) are large.
Next, we therefore derive the dual form of the problem, which we expect to be less expensive to compute.

\subsection{Dual problem}

The dual of the problem is given in the corollary below.

\begin{corollary}
The dual form of the original primal problem is given by
\begin{align}
\max_{\a} &
 \frac{-1}{2}  \bm{a}^T  Y^T ( (\Theta^T \Theta) \otimes G ) Y \bm{a} \notag \\
   & + (\bm{1}^T - (Y\bb )^T - \vec \left(\Theta^T X^s S G \right)^T
   Y) \bm{a}
 \\
& \text{s.t.} \quad c_T \bm{1} \ge \a \ge 0.
\end{align}
\end{corollary}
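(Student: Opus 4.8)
The plan is to treat the primal QP of the Lemma as a standard convex quadratic program and to form its Wolfe dual. First I would attach Lagrange multipliers to the two blocks of the constraint matrix: a vector $\a \ge 0$ for the margin rows $Y\Phi^T\w + \bxi \ge \bm{1}_{KM} - Y\bb$, and a second vector $\bm{\beta} \ge 0$ for the nonnegativity rows $\bxi \ge \bm{0}$. The Lagrangian is
\begin{align}
\mathcal{L}(\w,\bxi,\a,\bm{\beta})
&= \tfrac{1}{2}\w^T V \w - \q^T\w + c_T \bm{1}^T\bxi + \tfrac{1}{2}s \notag\\
&\quad - \a^T\!\left( Y\Phi^T\w + \bxi - \bm{1} + Y\bb \right) - \bm{\beta}^T\bxi ,
\end{align}
and I would then impose the KKT stationarity conditions to eliminate the primal variables.

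Stationarity in $\w$ gives $V\w = \q + \Phi Y^T\a$; since $c_f>0$ makes $V$ positive definite (it is $c_f I$ plus the positive semidefinite term coming from the $L_2$ constraints), $V$ is invertible and $\w = V^{-1}(\q + \Phi Y^T\a)$. Stationarity in $\bxi$ gives $c_T\bm{1} - \a - \bm{\beta} = \bm{0}$; together with $\bm{\beta}\ge 0$ this eliminates $\bm{\beta}$ and yields exactly the box constraint $0 \le \a \le c_T\bm{1}$, while forcing every $\bxi$ term in $\mathcal{L}$ to cancel. Substituting the expression for $\w$ back into $\mathcal{L}$ collapses the objective to
\begin{align}
-\tfrac{1}{2}\,\a^T Y\Phi^T V^{-1}\Phi Y^T\a
+ \a^T\!\left( \bm{1} - Y\bb - Y\Phi^T V^{-1}\q \right) + \text{const},
\end{align}
where the constant does not affect the maximization; using that $Y$ is the symmetric diagonal label matrix ($Y = Y^T$) already matches the outer $Y$ factors of the Corollary.

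The main obstacle, and the only place where real computation is needed, is to reduce the two $V^{-1}$ expressions to the compact Kronecker forms of the Corollary \emph{without} ever forming the $L_s(L_t+1)\times L_s(L_t+1)$ matrix $V^{-1}$. The key structural observation is that $V$ is itself a Kronecker product: vectorizing the quadratic-in-$W$ part of the $L_2$ term through $W\hat{\x}_m^t = (\hat{\x}_m^{tT}\otimes I_{L_s})\w$ gives $V = (c_f I_{L_t+1} + H)\otimes I_{L_s}$, where $H = \sum_m(\sum_n s_{nm})\hat{\x}_m^t\hat{\x}_m^{tT}$. Hence $V^{-1} = P\otimes I_{L_s}$ with $P = (c_f I_{L_t+1} + H)^{-1}$ only a small $(L_t+1)\times(L_t+1)$ matrix. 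Writing the $(k,m)$ column of $\Phi$ as $\hat{\x}_m^t\otimes\th_k$ (from $\th_k^T W\hat{\x}_m^t = (\hat{\x}_m^{tT}\otimes\th_k^T)\w$) and $\q = \vec(X^s S\,\hat{X}^{tT})$, I would apply the mixed-product rule $(A\otimes B)(C\otimes D) = (AC)\otimes(BD)$ together with $(\hat{\x}_m^{tT}\otimes\th_k^T)\vec(Z) = \th_k^T Z\hat{\x}_m^t$ to obtain the entrywise identities $[\Phi^T V^{-1}\Phi]_{(k,m),(k',m')} = (\th_k^T\th_{k'})(\hat{\x}_m^{tT}P\hat{\x}_{m'}^t)$ and $[\Phi^T V^{-1}\q]_{(k,m)} = \th_k^T X^s S G_{:,m} = (\Theta^T X^s S G)_{km}$, where $G = \hat{X}^{tT}P\hat{X}^t$ is the $M\times M$ regularized target Gram matrix. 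The genuinely delicate part is bookkeeping: assembling these entries into the stated compact objects $(\Theta^T\Theta)\otimes G$ and $\vec(\Theta^T X^s S G)$ requires pinning down the paper's index-ordering convention for the pair labels $(k,m)$ on $\a$ and the column-major $\vec$ convention (with the transposes these induce), and tracking the symmetry of $P$ and $Y$ throughout. Once the convention is fixed consistently, substituting the two identities into the collapsed objective of the previous step reproduces the Corollary verbatim, and the emergence of $G$ through the small matrix $P$ is precisely what makes the dual kernelizable and scalable in the feature dimensions $L_s$ and $L_t$.
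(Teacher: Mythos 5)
Your proposal is correct and follows essentially the same route as the paper: form the Lagrangian with multipliers for both constraint blocks, use stationarity in $\w$ and $\bxi$ to eliminate the primal variables and obtain the box constraint, and then exploit the Kronecker structure of $V$ (equivalently, the identity $\vec(\a\b^T)^T V^{-1}\vec(\c\d^T)=(\a^T\c)\,\b^T A^{-1}\d$ that the paper proves as a lemma) to collapse $\Phi^T V^{-1}\Phi$ into $(\Theta^T\Theta)\otimes G$ and $\q^T V^{-1}\Phi$ into $\vec(\Theta^T X^s S G)^T$. The only divergence is cosmetic: you work with the standard column-major $\vec$, giving $V=P\otimes I_{L_s}$, whereas the paper defines a row-major $\vec$ so that $V=I_{L_s}\otimes A$ with $A=P^{-1}$; this is a coordinate permutation that, as you note, only affects bookkeeping and leaves the final dual unchanged.
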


\subsection{Retrieving the primal solution}

After solving the dual problem with a QP solver,
we need to convert the dual solution $\a$ to $\w$ and $\b$ by
\begin{align}
  \w &= V^{-1} (\q + \Phi Y \a),
\end{align}
then finally to $W$.

Here, we again face the problem of large matrix $V$,
that should not be used.
In the corollary below, we show primal solution $W$ directly,
i.e., avoiding conversions from $\a$ to $\w$, and then to $W$.

\begin{corollary}
The solution to the primal problem is given by
\begin{align}
  W &=
  \left( X^s S
  +
  \Theta ( \Upsilon \odot \Lambda )^T  \right)
  (X^t)^T A^{-1},
\end{align}
where $\odot$ is element-wise multiplication
and variables are given in the proof.
\end{corollary}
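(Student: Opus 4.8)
The plan is to exploit the Kronecker structure hidden in $V$ so that the $L_s(L_t+1)$-dimensional inverse appearing in $\w = V^{-1}(\q + \Phi Y \a)$ is never formed, but is replaced by the inverse of the small $(L_t+1)\times(L_t+1)$ matrix $A$. First I would revisit the definitions of $V$, $\q$, and $\Phi$ from the primal QP lemma. Writing $\w = \vec(W^T)$ and using $W\hat\x_m^t = (I_{L_s}\otimes\hat\x_m^{tT})\w$, the Frobenius term contributes $I_{L_s}\otimes c_f I_{L_t+1}$ to the Hessian while each summand of the $L_2$ term contributes $I_{L_s}\otimes(\hat\x_m^t\hat\x_m^{tT})$. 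Collecting these shows $V = I_{L_s}\otimes A$ with $A = c_f I_{L_t+1} + \sum_m(\sum_n s_{nm})\hat\x_m^t\hat\x_m^{tT}$, which is symmetric; hence $V^{-1} = I_{L_s}\otimes A^{-1}$, and this is the key computational saving.

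Next I would rewrite both linear pieces as vectorized matrices. From $(\x_n^s)^T W\hat\x_m^t = (\x_n^s\otimes\hat\x_m^t)^T\w$ the weighted sum defining $\q$ becomes $\q = \vec(X^t S^T(X^s)^T)$, and from $\th_k^T W\hat\x_m^t = (\th_k\otimes\hat\x_m^t)^T\w$ the constraint matrix factors as $\Phi = \Theta\otimes X^t$, where $X^t$ denotes the augmented target matrix and $\Theta$ collects the weight parts $\th_k$. Applying the identity $(B^T\otimes C)\vec(X)=\vec(CXB)$ together with the symmetry of $A$, the first contribution unfolds to $V^{-1}\q = \vec(A^{-1}X^t S^T(X^s)^T)$, i.e. a $\vec(W^T)$ whose transpose contributes $X^s S(X^t)^T A^{-1}$ to $W$.

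For the second contribution I would un-stack $Y\a$: since $Y$ is diagonal with entries $y_{km}^t$ and $\a$ is indexed by the same $(k,m)$ pairs, the matricization of $Y\a$ is the element-wise product of the matricized dual variable with the label matrix, which is exactly $\Upsilon\odot\Lambda$. Then $V^{-1}\Phi Y\a = (\Theta\otimes A^{-1}X^t)\vec(\Upsilon\odot\Lambda) = \vec(A^{-1}X^t(\Upsilon\odot\Lambda)\Theta^T)$, contributing $\Theta(\Upsilon\odot\Lambda)^T(X^t)^T A^{-1}$ to $W$. Adding the two pieces and stripping off $\vec(\cdot)$ yields the claimed $W = (X^s S + \Theta(\Upsilon\odot\Lambda)^T)(X^t)^T A^{-1}$.

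I expect the only real difficulty to be bookkeeping rather than any deep obstacle: one must fix a single consistent vectorization ($\vec(W^T)$) and a single consistent $(k,m)$ ordering so that the Kronecker factors of $V$, $\q$, $\Phi$ and the stacking of $Y\a$ all align, and verify the symmetry of $A$ so that $A^{-T}=A^{-1}$ can be used freely. A wrong choice of ordering would produce $(\Upsilon\odot\Lambda)^T$ versus $\Upsilon\odot\Lambda$ or swap the two Kronecker factors (the same ambiguity that surfaces in the $(\Theta^T\Theta)\otimes G$ structure of the dual corollary), so the conventions must be pinned down before the $\vec$/Kronecker identities are applied; once they are, the remaining manipulation is routine.
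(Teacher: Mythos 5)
Your proposal is correct and follows essentially the same route as the paper: both exploit $V = I_{L_s}\otimes A$, the factored forms $\q = \vec(X^s S (X^t)^T)$ and $\Phi = \Theta\otimes X^t$, and the identification of the matricization of $Y\a$ with $\Upsilon\odot\Lambda$. The only difference is cosmetic — you invoke the standard identity $(B\otimes C)\vec(X)=\vec(CXB^T)$ in one shot, whereas the paper reaches the same conclusion by computing $V^{-1}\q$ and $V^{-1}\Phi$ as Kronecker products in separate lemmas and then stacking the rows $\w_i$ of $W$ explicitly.
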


\subsection{Kernelization}

We derive the kernel version of the dual formulation.
To do so,
we apply $W$ to target sample $\x^t$ by multiplying it from the left, i.e., $W \hat{\x}^t$;
Therefore, all computations with target samples are inner products,
which means we can use kernels to replace the inner products.

To transform target sample $\x^t$ with $W$, we have
\begin{align}
  W \hat{\x}^t
  =&
  \left( X^s S
  +
  \Theta ( \Upsilon \odot \Lambda )^T  \right) \notag \\ &
  \left(\frac{1}{c_f} I_{M} - \frac{1}{c_f^2} K^t (S_M^{-1} + \frac{1}{c_f} K^t)^{-1} \right) 
  \begin{pmatrix}
  k(\x^t_1, \x^t) \\
  \vdots\\
  k(\x^t_M, \x^t)
  \end{pmatrix}.
\end{align}

\section{Results and discussions}

In this section, we show experimental results
by using datasets consisting of two NBI endoscopic devices
to understand how our proposed method behaves given
differing numbers of target domain training samples.

We used the following parameter values throughout the experiments unless stated otherwise:
$c_f = c_t = c_s = c_d = 0.1$ for MMDTL2 based on our preliminary experiments,
and $c_s = 0.05$ and $c_t = 1$ for MMDT, as used in the code of the original MMDT
\cite{Hoffman2013a,Hoffman2013d,Hoffman2014a}.

\def\ppm{$\!\!\pm$}
\def\aa{\textsf{\tiny **}}
\def\aaa{\textsf{\tiny *}}
\def\cc{\textsf{\tiny ++}}
\def\ccc{\textsf{\tiny +}}

\subsection{NBI endoscopic image dataset}


The NBI dataset 
used in this experiment consisted of two domains. For the first (i.e., source) domain, we used the NBI image dataset consisting
of 908 NBI patches collected from endoscopic examinations at Hiroshima University
by using OLYMPUS EVIS LUCERA endoscope system \cite{CV-260};
patches were labeled based on NBI magnification findings \cite{Kanao2009,Oba2010},
which categorizes appearances of tumors into types A, B, and C, with type C further sub-classified into C1, C2, and C3 based on microvessel structures (see Figure \ref{fig:nbi_magnification}). In this study, we used only types A, B, and C3 in accordance with our previous work \cite{Tamaki2013,Hirakawa2014,Sonoyama2015a,Sonoyama2015b}.
In general, a patch is trimmed from a larger frame of the entire endoscopic image such that the trimmed rectangular region represents the typical texture pattern of the colorectal polyp appearing in the frame.
To align the size of patches between source and target domains, we further trimmed the center of each patch to $180 \times 180$ and
created 734 patches with 289 in type A, 365 in type B, and 80 in type C3.
Note that this study was conducted with the approval from the Hiroshima University Hospital ethics committee, and informed consents were obtained from the patients and/or family members for the endoscopic examinations.

For the second (i.e., target) domain, we used another NBI image
dataset consisting of 279 patches with 92 in type A and 187 in type B.
These images were taken by OLYMPUS EVIS LUCERA ELITE endoscope system \cite{CV-290}, which is a newer model than that of the system of the source domain.
Due to the limited number of endoscopic examinations using this newer endoscope system, we trimmed the center square to $180 \times 180$
from video frames of 41 NBI examination videos;
hence, there are two factors of domain shift here: (1) the NBI endoscopic devices and (2) the differences between still images (i.e., source) and video frames (i.e., target).
Note that type C3 samples obtained using the new endoscopy are not yet available because the number of type C3 samples, which corresponds to a developed cancer \cite{Tamaki2013}, is smaller in general, and the new endoscopy of our facility has not yet had the opportunity to capture an image of a developed cancer of this type.

From these two domains, we computed convolutional neural network (CNN) features extracted using CaffeNet \cite{jia2014caffe}; more specifically, this is the fc6 feature of 4096 dimensions, which is known to work well for many tasks \cite{Razavian_2014_CVPR_Workshops}. These features were used without dimensionality reduction.

To see the effect of the number of target samples,
we prepared training/test sets as follows.
First, we randomly split 
the source and target domain samples into half;
we therefore had a source training set of 367 source samples
and a target training set of 139 target samples.
Next, we kept a specified number of target 
samples per category (up to 40) in each of the target training sets,
discarding the rest.
For the test set, we used 140 target samples.
We created 10 training/test sets and reported average performance.

\begin{figure}[t]
\includegraphics[width=\linewidth]{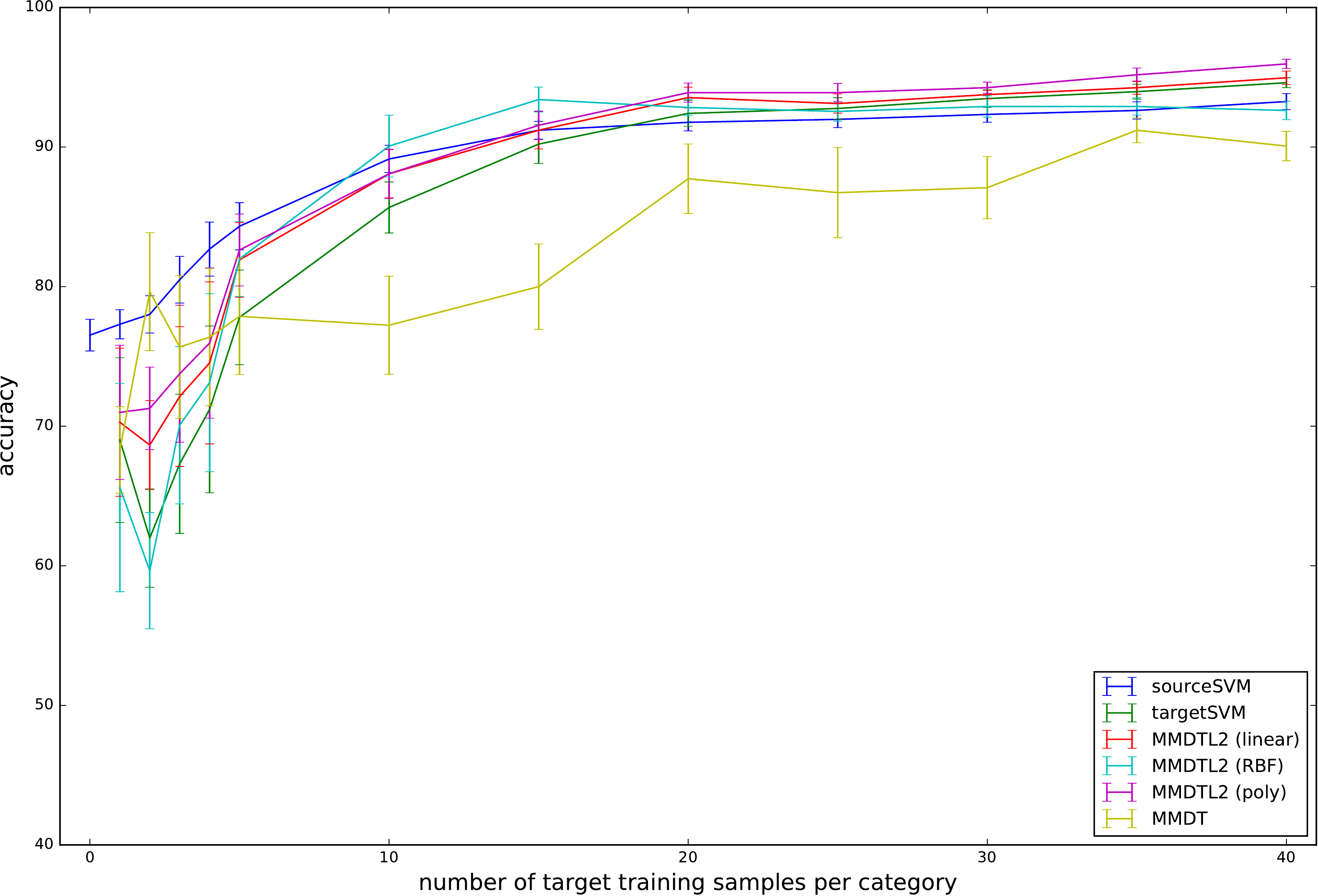}
\caption{Experimental results for the NBI datasets with the fc6 feature.}
\label{fig:NBI_result}
\end{figure}

Figure \ref{fig:NBI_result} shows performance results of different methods
over the different numbers of target training samples.
As a baseline, we also show two SVM results, i.e., source SVM and target SVM.
Source SVM estimates an SVM classifier with source and target training samples (and only source samples when no target samples are used).
Target SVM does the same, but only with target training samples.
MMDT results were obtained using the code provided by \cite{Hoffman2013d}, just as in our first experiment.
There were three results for MMDTL2, i.e., a linear version and two kernel versions with RBF and polynomial kernels.

Even with no target samples, source SVM performed relatively better and increased its performance as the number of target training samples increased. Target SVM started below 70\%, but caught up to source SVM when 20 training target samples were given.
The results indicate that our proposed MMDTL2 behaves between source and target SVMs. When one or two target samples are given, MMDTL2 behaves similarly to target SVM and far below the source SVM, but it quickly gains from the benefits of adaptation. With the RBF kernel, MMDTL2 is the best when 10 and 15 training samples are given, but the linear and polynomial kernels become better when more samples are given. MMDTL2 with the linear kernel and target SVM approach one another, which is expected because a sufficient number of target training samples are considered to be the best for classifying target domain samples.
Overall, MMDTL2 with a polynomial kernel works the best.

\begin{table*}[t]
\centering
\caption{Experimental results for the NBI datasets with the fc6 feature.}
\label{tab:NBI_result}
\scriptsize
\begin{tabular}{c|llllll}
 & & & & MMDTL2 & MMDTL2 & MMDTL2 \\
 & sourceSVM & targetSVM & MMDT & (linear) & (RBF) & (poly) \\ \hline
1   & 77.31 \ppm 1.05 \cc& 69.01 \ppm 5.90 & 68.30 \ppm 3.10  & 70.28 \ppm 5.31 & 65.60 \ppm 7.46  & 70.99 \ppm 4.81 \\
2   & 78.01 \ppm 1.34 \cc& 61.99 \ppm 3.52 & 79.65 \ppm 4.22  & 68.65 \ppm 3.18 \cc  & 59.65 \ppm 4.16 & 71.28 \ppm 2.96 \cc\\
3   & 80.50 \ppm 1.67 \cc& 67.30 \ppm 4.98 & 75.67 \ppm 5.11 \cc & 72.13 \ppm 5.01  & 70.07 \ppm 5.63 & 73.76 \ppm 4.91 \ccc\\
4   & 82.69 \ppm 1.94 \cc& 71.21 \ppm 5.98 & 76.38 \ppm 4.92 \cc & 74.54 \ppm 5.80  & 73.12 \ppm 6.38 & 75.96 \ppm 5.38 \\
5   & 84.33 \ppm 1.69 \cc& 77.80 \ppm 3.39 & 77.87 \ppm 4.17    & 81.91 \ppm 2.68 \ccc & 81.99 \ppm 2.69 \ccc  & 82.62 \ppm 2.58 \cc\\
10  & 89.15 \ppm 0.97 \cc& 85.67 \ppm 1.83 & 77.23 \ppm 3.52    & 88.09 \ppm 1.73 \ccc  & 90.07 \ppm 2.22 \cc  & 88.09 \ppm 1.76 \ccc\\
15  & 91.21 \ppm 0.64 & 90.21 \ppm 1.39 & 80.00 \ppm 3.06    & 91.21 \ppm 1.33  & 93.40 \ppm 0.89 \aa \cc & 91.56 \ppm 1.02 \\
20  & 91.77 \ppm 0.61 & 92.41 \ppm 0.93 & 87.73 \ppm 2.49    & 93.55 \ppm 0.74 \aa \ccc   & 92.84 \ppm 0.58 \aa& 93.90 \ppm 0.69 \aa \cc\\
25  & 91.99 \ppm 0.59 & 92.77 \ppm 0.78 \aaa  & 86.74 \ppm 3.23 & 93.12 \ppm 0.69 \aa   & 92.55 \ppm 0.70  & 93.90 \ppm 0.66 \aa \cc\\
30  & 92.34 \ppm 0.56 & 93.48 \ppm 0.63 \aa & 87.09 \ppm 2.23  & 93.76 \ppm 0.30 \aa  & 92.91 \ppm 0.76 & 94.26 \ppm 0.40 \aa \cc\\
35  & 92.62 \ppm 0.62 & 93.97 \ppm 0.53 \aa & 91.21 \ppm 0.89  & 94.26 \ppm 0.48 \aa  & 92.91 \ppm 0.63 & 95.18 \ppm 0.48 \aa \cc\\
40  & 93.26 \ppm 0.57 & 94.61 \ppm 0.35 \aa & 90.07 \ppm 1.05  & 94.96 \ppm 0.48 \aa  & 92.62 \ppm 0.65 & 95.96 \ppm 0.34 \aa \cc\\
\end{tabular}
\end{table*}

To make the discussion above more quantitative, we performed the one-tailed Welch's unequal variances $t$-test \cite{Welch1951,Kanji2006} to show the statistical significance of the proposed method with respect to sourceSVM and targetSVM. Table \ref{tab:NBI_result} shows the same performance results presented in Figure \ref{fig:NBI_result}, but with results of the statistical test. Tests with 1\% and 5\% significance levels (i.e., $p < 0.01$ and $p < 0.05$) are indicated by $**$ and $*$, respectively, with respect to sourceSVM. Similarly, $++$ and $+$ indicate the significance of test results with respect to targetSVM. MMDTL2 with a polynomial kernel is better than sourceSVM and targetSVM when $M \ge 20 $ ($M$ is is the number of target samples given per category). MMDTL2 with the RBF kernel works well only when $20 \ge M \ge 5$. MMDTL2 is better than sourceSVM when $M \ge 20$.


\subsection{NBI endoscopic image dataset with high-dimension features}

\begin{figure}[t]
\includegraphics[width=\linewidth]{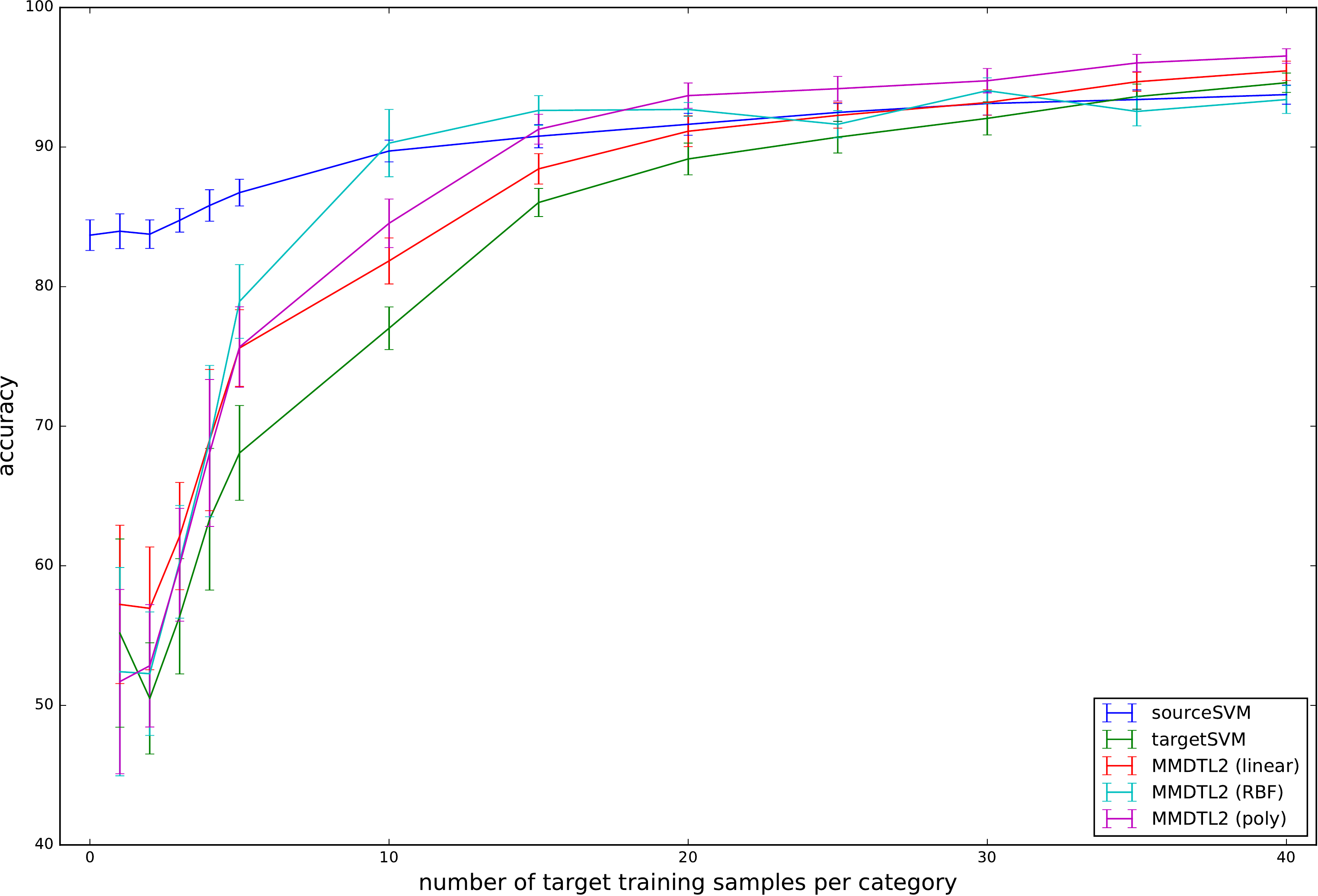}
\caption{Experimental results for the NBI datasets with the conv3 feature.}
\label{fig:NBI_result_conv3}
\end{figure}

Figure \ref{fig:NBI_result_conv3} shows performance results of the given methods
as in Figure \ref{fig:NBI_result} with the same protocols for training and evaluation.
The difference here is the set of features used; more specifically, we use the conv3 features of 64,896 dimensions rather than fc6.
For NBI patch classification problems \cite{Tamaki2016a},
we have seen that conv3 features worked better than fc6 features.
Therefore we choose conv3 features for this experiment.
However, transformation matrix $W$ for the conv3 features could be very large without our efficient dual formulation.
The ability to handle such large dimensions of features is the key advantage of our proposed dual MMDTL2.
In contrast, it is not possible to train MMDT because it involves a matrix $W$ of size $64,896 \times 64,897$ for storage (approximately 31 GB for double-type elements) and requires more space for working memory. The test phase is also impractical because MMDT uses $W$ expertly for converting target samples.
Figure \ref{fig:NBI_result_conv3} shows that the sourceSVM is the best when a few target samples are available, just as in Figure \ref{fig:NBI_result}. Further, our proposed MMDTL2 with a polynomial kernel becomes better at the right half of the plot, and the differences between sourceSVM and targetSVM are much more significant than those in Figure \ref{fig:NBI_result}.

Table \ref{tab:NBI_result_conv3} shows the same performance results as shown in Figure \ref{fig:NBI_result_conv3} along with the results of the one-tailed Welch's $t$-test \cite{Welch1951,Kanji2006}. The meaning of the marks is the same as that in Table \ref{tab:NBI_result}. Again, MMDTL2 with a polynomial kernel is better than sourceSVM and targetSVM when $M \ge 20 $. MDTL2 with the linear kernel becomes better than sourceSVM and targetSVM when $M \ge 35$.

\begin{table*}[t]
\centering
\caption{Experimental results for the NBI datasets with the conv3 feature.}
\label{tab:NBI_result_conv3}
\scriptsize
\begin{tabular}{c|lllll}
 & & & MMDTL2 & MMDTL2 & MMDTL2 \\ 
 & sourceSVM & targetSVM & (linear) & (RBF) & (poly) \\ \hline
1 & 83.97 \ppm 1.25 \cc& 55.18 \ppm 6.74 & 57.23 \ppm 5.67 & 52.41 \ppm 7.46 & 51.70 \ppm 6.60 \\
2 & 83.76 \ppm 1.02 \cc& 50.50 \ppm 3.98 & 56.95 \ppm 4.39 \cc& 52.27 \ppm 4.43 & 52.84 \ppm 4.39 \\
3 & 84.75 \ppm 0.84 \cc& 56.38 \ppm 4.13 & 62.13 \ppm 3.84 \cc& 60.28 \ppm 4.04 & 60.07 \ppm 4.04 \\
4 & 85.82 \ppm 1.12 \cc& 63.33 \ppm 5.08 & 69.01 \ppm 5.06 \ccc& 68.94 \ppm 5.42 & 68.09 \ppm 5.26 \\
5 & 86.74 \ppm 0.96 \cc& 68.09 \ppm 3.39 & 75.60 \ppm 2.75 \cc& 78.94 \ppm 2.64 \cc& 75.67 \ppm 2.89 \cc\\
10 & 89.72 \ppm 0.78 \cc& 77.02 \ppm 1.53 & 81.84 \ppm 1.65 \cc& 90.28 \ppm 2.41 \cc& 84.54 \ppm 1.74 \cc\\
15 & 90.78 \ppm 0.83 \cc& 86.03 \ppm 1.01 & 88.44 \ppm 1.09 \cc& 92.62 \ppm 1.06 \aa \cc& 91.28 \ppm 1.07 \cc\\
20 & 91.63 \ppm 0.78 \cc& 89.15 \ppm 1.14 & 91.13 \ppm 1.10 \cc& 92.70 \ppm 0.50 \aa \cc& 93.69 \ppm 0.91 \aa \cc\\
25 & 92.48 \ppm 0.64 \cc& 90.71 \ppm 1.14 & 92.27 \ppm 0.91 \cc& 91.63 \ppm 0.98 & 94.18 \ppm 0.88 \aa \cc\\
30 & 93.12 \ppm 0.83 & 92.06 \ppm 1.19 & 93.19 \ppm 0.91 & 94.04 \ppm 0.92 \cc& 94.75 \ppm 0.88 \aa \cc\\
35 & 93.40 \ppm 0.69 & 93.62 \ppm 0.90 & 94.68 \ppm 0.68 \aa \ccc& 92.55 \ppm 1.03 & 96.03 \ppm 0.62 \aa \cc\\
40 & 93.76 \ppm 0.68 & 94.61 \ppm 0.69 & 95.46 \ppm 0.69 \aa \ccc& 93.40 \ppm 1.00 & 96.52 \ppm 0.52 \aa \cc\\
\end{tabular}
\end{table*}

\subsection{NBI endoscopic image dataset with features of different dimensions}

Figure \ref{fig:NBI_result_diff_dim} shows another set of performance results. Here, we use different feature dimensions for source and target samples. In particular, we use the conv3 features of 64,896 dimensions for the source samples and the conv5 features of 43,264 dimensions for the target samples. In this case, MMDT does not work because of its high memory cost as described in the previous section, and furthermore, sourceSVM cannot be used because of the difference of feature dimensions. Table \ref{tab:NBI_result_diff_dim} shows the same performance results along with the results of the one-tailed Welch's $t$-test \cite{Welch1951,Kanji2006} using the same marks as those in Table \ref{tab:NBI_result}. MMDTL2 with a polynomial kernel is better than targetSVM again, when $n \ge 10 $. MMDTL2 with the linear and RBF kernels becomes better than targetSVM when $25 \ge n \ge 10$.

\begin{figure}[t]
\includegraphics[width=\linewidth]{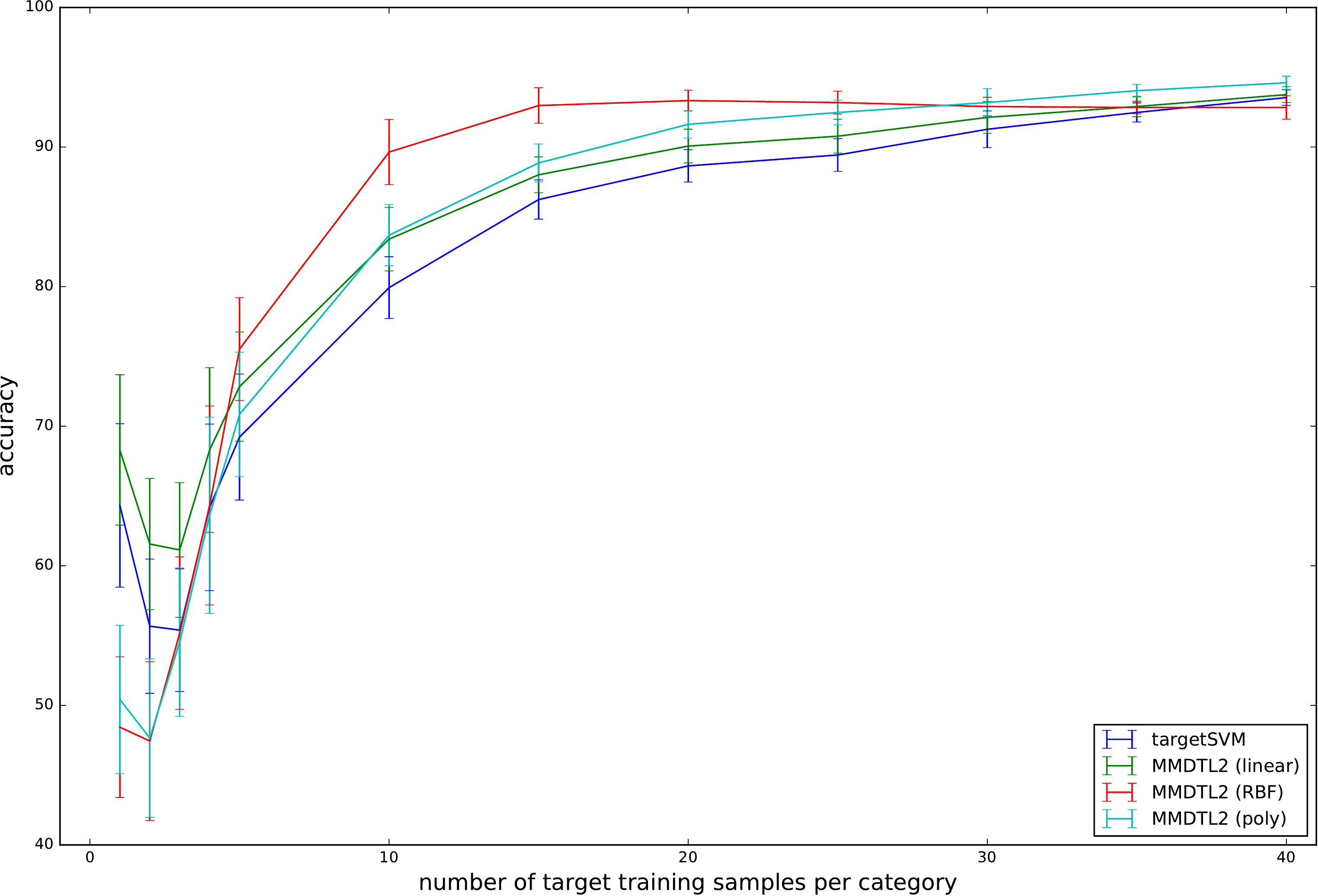}
\caption{Experimental results for the NBI datasets with the different features in source and target domains.}
\label{fig:NBI_result_diff_dim}
\end{figure}

\begin{table*}[t]
\centering
\caption{Experimental results for the NBI datasets with the different features in source and target domains.}
\label{tab:NBI_result_diff_dim}
\scriptsize
\begin{tabular}{c|llll}
 & targetSVM & MMDTL2 & MMDTL2 & MMDTL2 \\
 &           & (linear)& (RBF) & (poly) \\ \hline
1 & 64.33 \ppm 5.86 & 68.30 \ppm 5.39 & 48.44 \ppm 5.04 & 50.43 \ppm 5.32 \\
2 & 55.67 \ppm 4.81 & 61.56 \ppm 4.70 \ccc & 47.45 \ppm 5.69 & 47.66 \ppm 5.68 \\
3 & 55.39 \ppm 4.40 & 61.13 \ppm 4.83 \ccc & 55.18 \ppm 5.46 & 54.54 \ppm 5.32 \\
4 & 64.18 \ppm 5.97 & 68.30 \ppm 5.90 & 64.33 \ppm 7.13 & 63.62 \ppm 7.03 \\
5 & 69.22 \ppm 4.51 & 72.84 \ppm 3.91 & 75.53 \ppm 3.68 \cc & 70.85 \ppm 4.46 \\
10 & 79.93 \ppm 2.21 & 83.40 \ppm 2.28 \cc & 89.65 \ppm 2.34 \cc & 83.69 \ppm 2.19 \cc \\
15 & 86.24 \ppm 1.41 & 88.01 \ppm 1.28 \ccc & 92.98 \ppm 1.28 \cc & 88.87 \ppm 1.35 \cc \\
20 & 88.65 \ppm 1.16 & 90.07 \ppm 1.21 \ccc & 93.33 \ppm 0.75 \cc & 91.63 \ppm 0.98 \cc \\
25 & 89.43 \ppm 1.17 & 90.78 \ppm 1.21 \ccc & 93.19 \ppm 0.82 \cc & 92.48 \ppm 0.89 \cc \\
30 & 91.28 \ppm 1.32 & 92.13 \ppm 1.14 & 92.91 \ppm 0.66 \cc & 93.19 \ppm 0.99 \cc \\
35 & 92.48 \ppm 0.69 & 92.91 \ppm 0.73 & 92.84 \ppm 0.45 & 94.04 \ppm 0.45 \cc \\
40 & 93.55 \ppm 0.55 & 93.76 \ppm 0.57 & 92.84 \ppm 0.84 & 94.61 \ppm 0.48 \cc \\
\end{tabular}
\end{table*}

\subsection{Effects of parameter values}

We introduced parameters $c_f$ and $c_d$ for our MMDTL2 formulation in Section 2. Figure \ref{fig:NBI_result_cf} shows performance results for different values of $c_f$, whereas other parameters are fixed to
their default values. The results are almost identical when $c_f > 10^{-7}$; however, the performance becomes unsatisfactory for smaller values of $c_f$. This is because the inverse of $c_f$ is used in the formulation of MMDTL2 (see the Appendix). This suggests that a small amount of regularization for $\| W \|_F$ might help improve performance.
Note that the instability of the results when $c_f \le 10^{-7}$
may be due to the fact that variables of single precision have 8 digits of precision,
and the inverses of smaller values lead to numerical instability.
Therefore, we can say that results are insensitive to the choice of $c_f$ as long as larger values are used.

Figure \ref{fig:NBI_result_cd} shows the performance results for different values of $c_d$, whereas other parameters are fixed to their default values. To see the differences between the different values, the results for 20 target samples are shown in Figure \ref{fig:NBI_result_cd_n20}. The results are almost identical when $c_d \ge 10^{-6}$; however, the performance decreases for $10^{-7} \ge c_d > 10^{-10}$. 
A detailed investigation of this effect is one of our future tasks; 
however, it is clear that the performance is better and stable for larger values of $c_d$.

\begin{figure}[t]
\includegraphics[width=\linewidth]{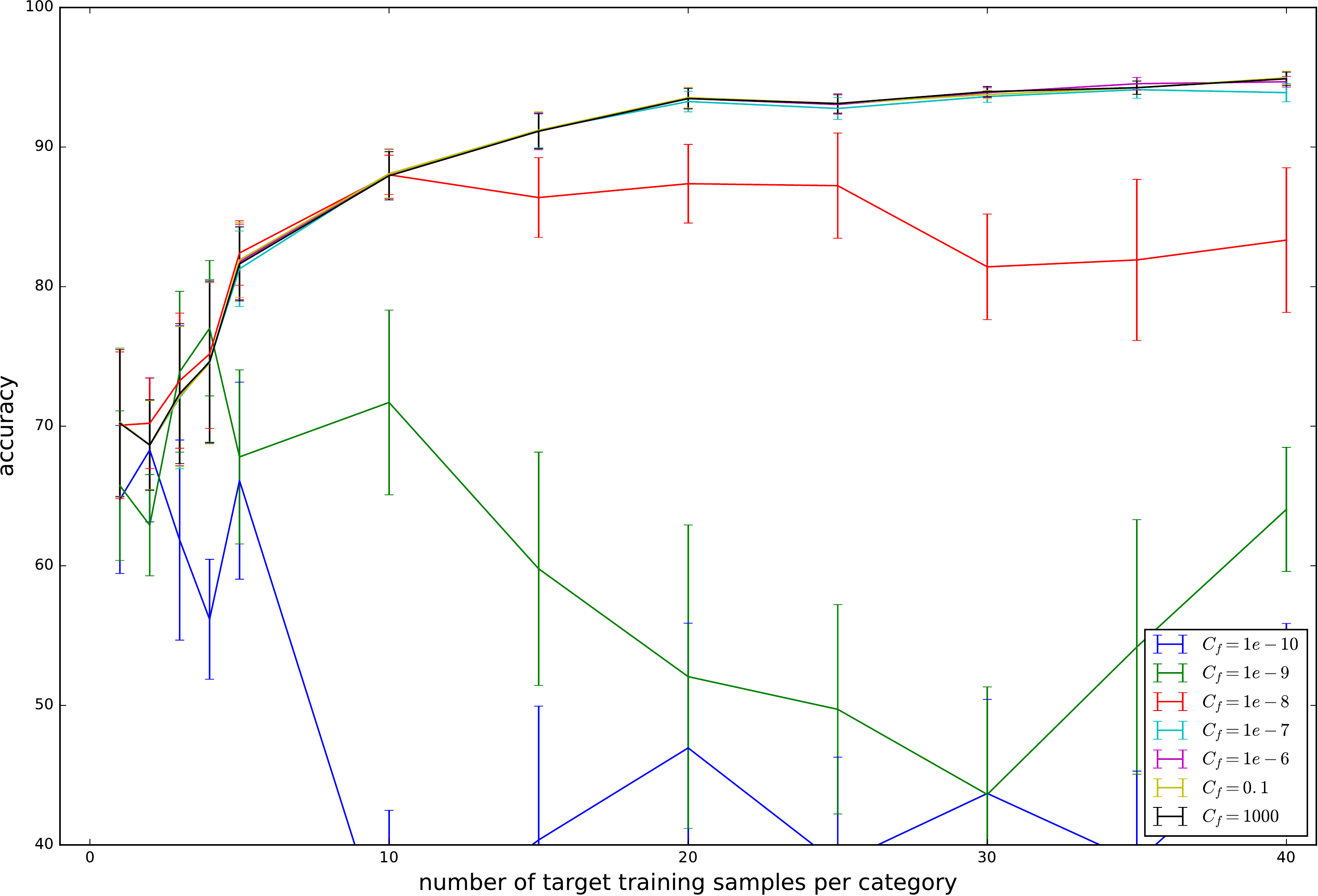}
\caption{Effect of $c_f$ for the NBI datasets with the fc6 feature.}
\label{fig:NBI_result_cf}
\end{figure}

\begin{figure}[t]
\includegraphics[width=\linewidth]{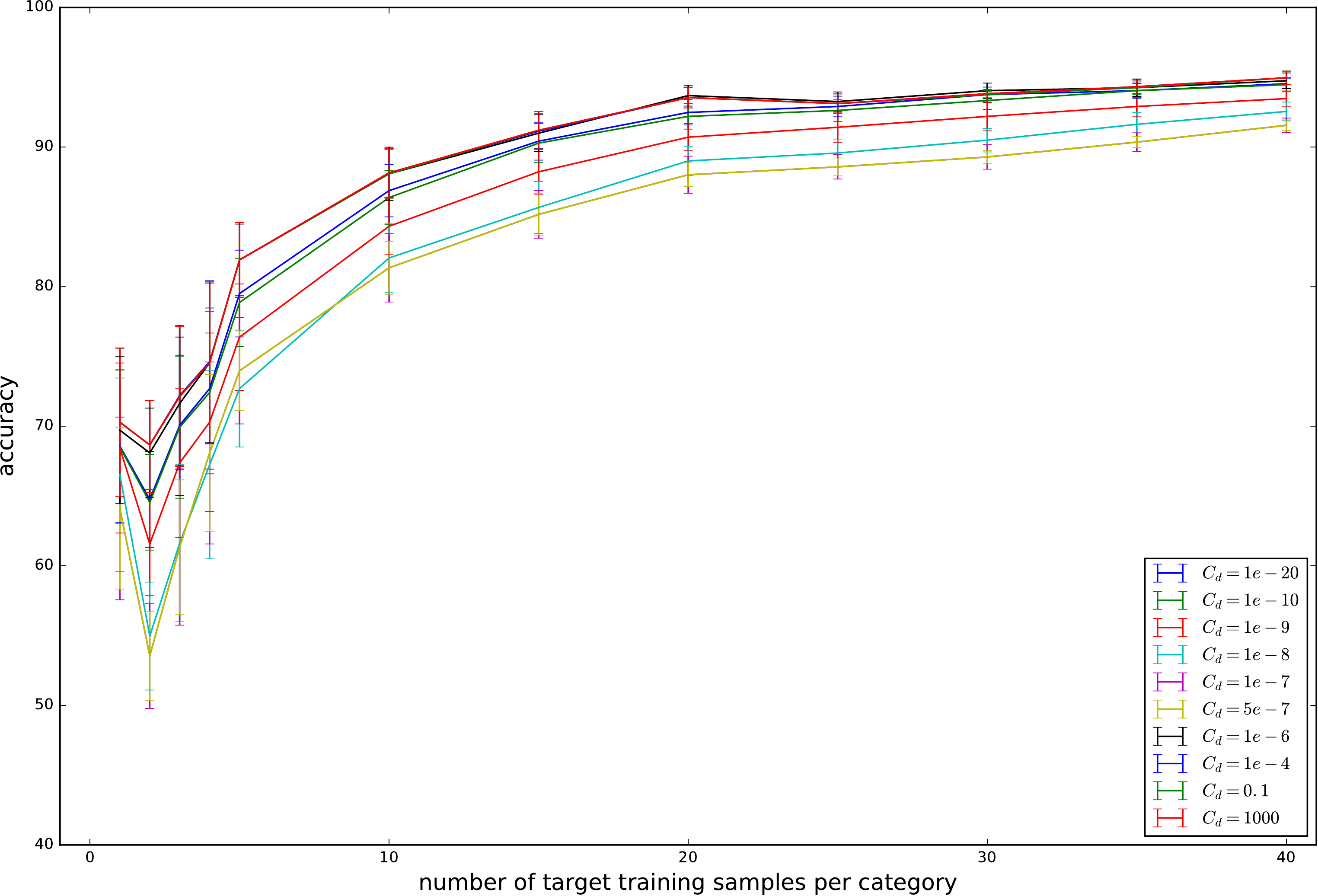}
\caption{Effect of $c_d$ for the NBI datasets with the fc6 feature.}
\label{fig:NBI_result_cd}
\end{figure}

\begin{figure}[t]
\includegraphics[width=\linewidth]{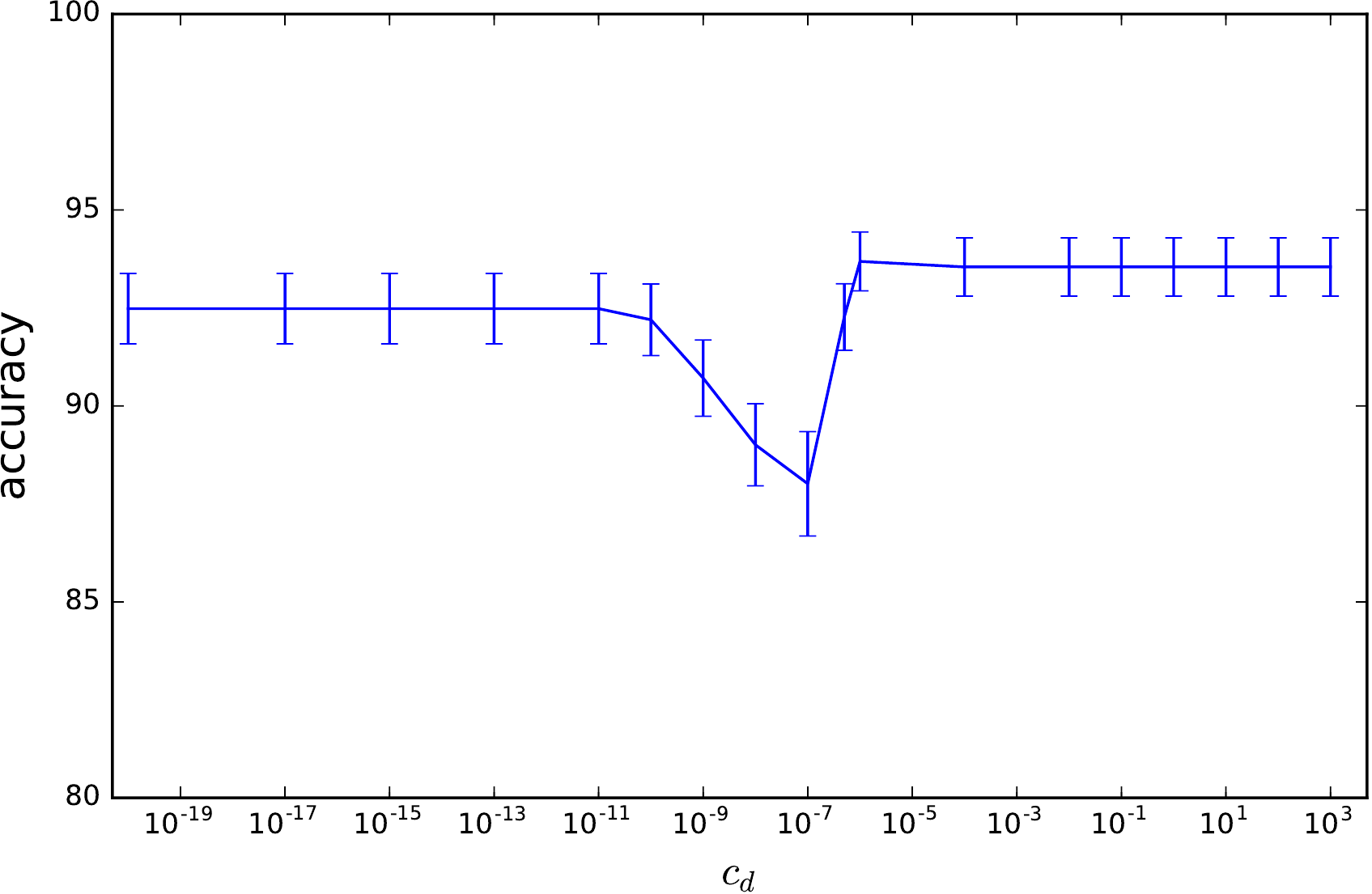}
\caption{Effect of $c_d$ for the NBI datasets with the fc6 feature and 20 training samples.}
\label{fig:NBI_result_cd_n20}
\end{figure}

\subsection{Computation time}

Figure \ref{fig:NBI_result_time} shows the computation time for the training phases of each method over different feature dimensions ($L_s = L_t$) using MATLAB implementations on an Intel Core i7 3.4 GHz CPU with 16 GB memory. The primal MMDTL2 involves quite a large matrix for $V$, and it is not practical when $L_s > 180$ in terms of memory and computation time. The cost of the dual MMDTL2 depends on the number of target samples; however, it is quite fast compared to the primal MMDTL2 for both 10 and 40 target training samples. The dual MMDTL2 can deal with much higher feature dimensions, while MMDT exceeds the memory limitation when $L_s > 10000$ because matrix $W$ must be explicitly stored.

\begin{figure}[t]
\includegraphics[width=\linewidth]{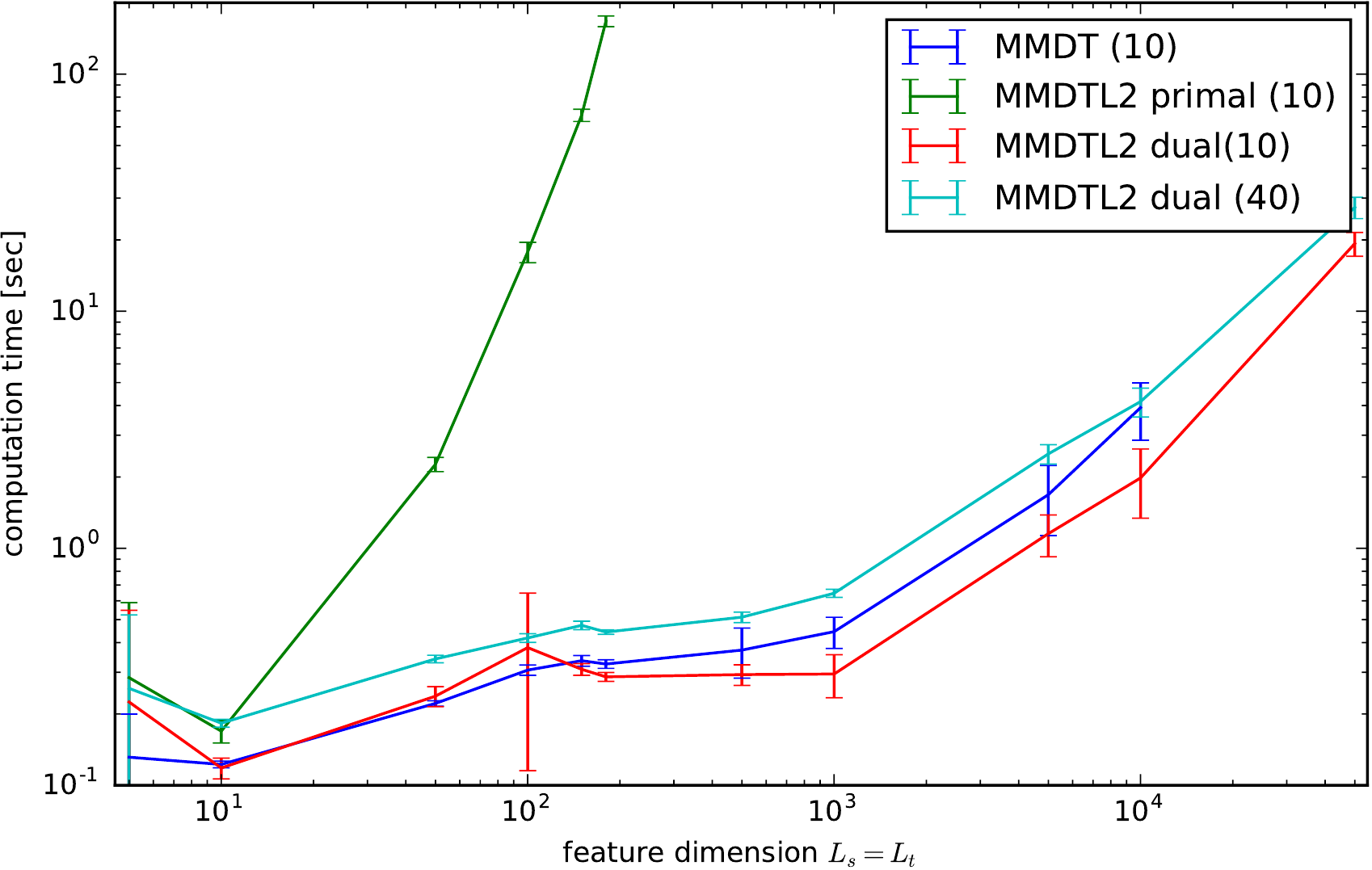}
\caption{Computation time. The numbers of target samples used are shown in braces.}
\label{fig:NBI_result_time}
\end{figure}

\section{Conclusions}

In this paper, we proposed MMDT with $L_2$ constraints, i.e., MMDTL2, deriving the dual formulation
with much lesser computational costs as compared to the naive QP problem.
Further, we showed the kernelization of our method.
Experimental results with NBI datasets from two different endoscopic devices showed that our proposed MMDTL2 with linear and polynomial kernels
performed better than the given baselines (i.e., source and target SVMs).
Our future work includes using other loss functions for problem formulation. We observed that the one-vs-rest multiclass classification by SVMs was a performance bottleneck of MMDTL2 in our experiments. 
Therefore, instead of relying on maximum margin loss functions, multiclass logistic loss might be better here. In the future, we plan to explore this idea and report performance results for the NBI dataset as well.
In addition, we plan to investigate parameter tuning with cross validation and compare the proposed method with other adaptation methods such as that in \cite{Tzeng2015}.


%

\appendices


\setcounter{problem}{3}
\setcounter{corollary}{0}
\setcounter{lemma}{0}

\section{Primal problem}

In this section, we rephrase subproblem (\ref{eq:w_subproblem_mmdtl2})
with inequality constraints instead of loss functions.

\begin{problem}[Estimation of $W$]
\label{prob:estimation_w}

We want to find $W \in \mathbb{R}^{L_s \times (L_t+1)}$ that minimizes
the following objective function:
\begin{align}
  \min_{W, \{ \xi_{km}^t \} } &
  \frac{1}{2} c_f \| W \|_F^2
  +
  c_T \sum_{k=1}^K \sum_{m=1}^M \xi_{km}^t \notag \\
  & + \frac{1}{2}   \sum_{m=1}^M \sum_{n=1}^N s_{nm} \| W \hat\x_m^t - \x_n^s \|_2^2
  \label{eq:estimation_w_cost}
  \\
  & \text{s.t.} \notag
  \\
  & \xi_{km}^t \ge 0,
  \label{eq:estimation_w_condition1}
  \\
  & y_{km}^t \hat\th_k^T 
  \label{eq:estimation_w_condition2}
  \begin{pmatrix}
  W \hat\x_m^t \\
  1
  \end{pmatrix}
   -1 + \xi_{km}^t \ge 0.
\end{align}

\end{problem}

First, we rewrite the objective function in a matrix form.
To this end, we introduce the vec operator and some formulas below.

\subsection{Operator vec}

Here, we define a vectorized operator for rearranging matrix-vector products.

\begin{define}
\label{def:vec}
For a given matrix $W \in \mathbb{R}^{L_s \times (L_t+1)}$,
denoted by a set of row vectors $\w_i \in \mathbb{R}^{L_t}$ as 
\begin{align}
  W =
  \begin{pmatrix}
    \w_1^T \\
    \w_2^T \\
    \vdots \\
    \w_{L_s}^T
  \end{pmatrix},
\end{align}
we define operator $\vec$, which vectorizes $W$ in the row-major order as
\begin{align}
  \vec(W) =
  \begin{pmatrix}
      \w_1 \\
      \w_2 \\
      \vdots \\
      \w_{L_s}
    \end{pmatrix}
    \in \mathbb{R}^{L_s (L_t+1)}.
\end{align}
\end{define}

This definition is different from the one used in the literature, which is defined in the column-major order, for example, in \cite{Harville1997}.

Next, we can rewrite matrix-vector multiplications using the vec operator, as
summarized in the following lemma.

\begin{lemma}
\label{lemma:multiplication_with_vec}
For given matrix $W \in \mathbb{R}^{L_s \times (L_t+1)}$
and vectors $\x \in \mathbb{R}^{L_t +1}$ and $\z \in \mathbb{R}^{L_s}$,
the following equations hold:
\begin{align}
  W \hat\x &= (I_{L_s} \otimes \hat\x^T) \w \\
  \hat\x^T W^T W \hat\x &= \w^T (I_{L_s} \otimes \hat\x\hat\x^T) \w \\
  \z^T W \hat\x &= \vec(\z \hat\x^T)^T \w
\end{align}
Here, $\w = \vec(W)$,
$I_{L_s} \in \mathbb{R}^{L_s\times L_s}$, is an identity matrix
and $\otimes$ is the tensor product.
\end{lemma}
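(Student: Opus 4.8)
The plan is to prove the three identities in order, treating the first as the workhorse from which the other two follow, and to carry out every step directly at the level of entries so as to stay faithful to the \emph{row-major} convention of Definition~\ref{def:vec}. Throughout I would write $W$ as the stack of its rows $\w_1^T,\ldots,\w_{L_s}^T$ with $\w_i\in\mathbb{R}^{L_t+1}$, so that $\w=\vec(W)=(\w_1^T,\ldots,\w_{L_s}^T)^T$ by definition.

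For the first identity, the key observation is that $I_{L_s}\otimes\hat\x^T$ is the $L_s\times L_s(L_t+1)$ block-diagonal matrix whose $i$-th diagonal block is the single row $\hat\x^T$. Multiplying it against the stacked vector $\w$ therefore yields, in its $i$-th entry, exactly $\hat\x^T\w_i=\w_i^T\hat\x$. On the other hand the $i$-th entry of $W\hat\x$ is $\w_i^T\hat\x$, since $\w_i^T$ is the $i$-th row of $W$. Comparing entry by entry gives $W\hat\x=(I_{L_s}\otimes\hat\x^T)\w$.

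The second identity I would derive from the first. Writing $\hat\x^T W^T W\hat\x=(W\hat\x)^T(W\hat\x)$ and substituting $W\hat\x=(I_{L_s}\otimes\hat\x^T)\w$ gives $\w^T(I_{L_s}\otimes\hat\x^T)^T(I_{L_s}\otimes\hat\x^T)\w$. Applying the transpose rule $(A\otimes B)^T=A^T\otimes B^T$ and then the mixed-product rule $(A\otimes B)(C\otimes D)=(AC)\otimes(BD)$ collapses the middle factor, $(I_{L_s}\otimes\hat\x)(I_{L_s}\otimes\hat\x^T)=I_{L_s}\otimes(\hat\x\hat\x^T)$, which yields the claim. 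For the third identity the cleanest route is again direct: using the first identity (or the row structure of $W$), $\z^T W\hat\x=\sum_{i=1}^{L_s}z_i\,\w_i^T\hat\x$. Separately, $\z\hat\x^T$ is the $L_s\times(L_t+1)$ matrix whose $i$-th row is $z_i\hat\x^T$, so by Definition~\ref{def:vec} its row-major vectorization is $\vec(\z\hat\x^T)=(z_1\hat\x^T,\ldots,z_{L_s}\hat\x^T)^T$, whence $\vec(\z\hat\x^T)^T\w=\sum_i z_i\hat\x^T\w_i$. The two scalar sums coincide, establishing the identity.

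The main thing to watch, and the only real source of error, is the nonstandard row-major convention, which the paper itself flags as differing from the column-major $\vec$ of Harville. The familiar textbook formula $\vec(AXB)=(B^T\otimes A)\vec(X)$ is stated for column-major order and would place the Kronecker factors in the wrong order here; under the row-major convention it instead reads $\vec(AXB)=(A\otimes B^T)\vec(X)$. I would therefore avoid invoking any such abstract identity and verify everything through the block-diagonal structure of $I_{L_s}\otimes\hat\x^T$ and entrywise index matching, where the ordering is unambiguous, which is precisely why the direct computations above are preferable to manipulating Kronecker formulas wholesale.
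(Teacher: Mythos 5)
Your proposal is correct and follows essentially the same route as the paper: verifying the first identity via the block-diagonal structure of $I_{L_s}\otimes\hat\x^T$ acting on the row-stacked $\w$, obtaining the second by squaring it (the paper multiplies the two block-diagonal factors explicitly where you invoke the mixed-product rule, a cosmetic difference), and checking the third by matching the scalar sums $\sum_i z_i\hat\x^T\w_i$. Your cautionary remark about the row-major convention is well taken but does not change the argument.
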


\begin{proof}
First, we have
\begin{align}
  W \hat\x
  &=
  \begin{pmatrix}
    \w_1^T \hat\x \\
    \w_1^T \hat\x \\
    \vdots \\
    \w_{L_s}^T \hat\x^T 
  \end{pmatrix}
  =
  \begin{pmatrix}
    \hat\x^T \w_1 \\
    \hat\x^T \w_2 \\
    \vdots \\
    \hat\x^T \w_{L_s}
  \end{pmatrix}
  =
  \begin{pmatrix}
    \hat\x^T &  &  &  \\
     & \hat\x^T &  &  \\
     &    & \ddots &  \\
     &  &  & \hat\x^T
  \end{pmatrix}
  \begin{pmatrix}
      \w_1 \\
      \w_2 \\
      \vdots \\
      \w_{L_s}
    \end{pmatrix}
  \\
  &=
   (I_{L_s} \otimes \hat\x^T) \w.
\end{align}


Using this equation, we have
\begin{align}
  \hat\x^T W^T W \hat\x
  &=
  \w^T
  \begin{pmatrix}
    \hat\x &  &  &  \\
     & \hat\x &  &  \\
     &    & \ddots &  \\
     &  &  & \hat\x
  \end{pmatrix}
  \begin{pmatrix}
    \hat\x^T &  &  &  \\
     & \hat\x^T &  &  \\
     &    & \ddots &  \\
     &  &  & \hat\x^T
  \end{pmatrix}
  \w
  \\
  &=
  \w^T
  \begin{pmatrix}
    \hat\x\hat\x^T &  &  &  \\
     & \hat\x\hat\x^T &  &  \\
     &    & \ddots &  \\
     &  &  & \hat\x\hat\x^T
  \end{pmatrix}
  \w \\
  &=
  \w^T (I_{L_s} \otimes \hat\x\hat\x^T) \w.
\end{align}

Also, we have
\begin{align}
  \z^T W \hat\x
  &=
  \z^T
  \begin{pmatrix}
    \hat\x^T &  &  \\
     & \hat\x^T &  \\
     &    & \ddots \\
  \end{pmatrix}
  \w
  =
  (z_1 \hat\x^T, z_2 \hat\x^T, \ldots)
  \w
  \\
  &=
  \vec
  \begin{pmatrix}
    z_1 \hat\x^T \\
    z_2 \hat\x^T \\
    \vdots
  \end{pmatrix}^T
  \w
  =
  \vec(\z \hat\x^T)^T \w
  =
  \w^T \vec(\z \hat\x^T).
\end{align}

\end{proof}

For later use, we also define
\begin{align}
  U(\hat\x) 
  = 
  (I_{L_s} \otimes \hat\x\hat\x^T).
\end{align}

\subsection{Rewriting terms with vec operator}


In this subsection, we rewrite the $L_2$ term in the objective function using the lemma below.

\begin{lemma}
The $L_2$ constraint term of MMDTL2 can be written as 
\begin{align}
  \frac{1}{2} 
  \sum_{m=1}^M \sum_{n=1}^N s_{nm} \| W \hat\x_m^t - \x_n^s \|_2^2
  &= \frac{1}{2}  (\w^T U \w - 2 \q^T \w + s),
\end{align}
where $U$, $\q$, and $s$ are given in the proof below.
\end{lemma}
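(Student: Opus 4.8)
The plan is to expand the squared Euclidean norm $\|W\hat\x_m^t - \x_n^s\|_2^2$ into a quadratic term, a cross term, and a constant term, and then rewrite each piece using the three identities already established in Lemma~\ref{lemma:multiplication_with_vec}. First I would write, for each fixed pair $(m,n)$, the standard expansion of a squared difference,
\begin{align}
\|W\hat\x_m^t - \x_n^s\|_2^2
&= (\hat\x_m^t)^T W^T W \hat\x_m^t - 2(\x_n^s)^T W \hat\x_m^t + \|\x_n^s\|_2^2,
\end{align}
which isolates the dependence on $W$ into the first two terms.

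Next I would apply Lemma~\ref{lemma:multiplication_with_vec} termwise. Its second identity turns the quadratic term into $\w^T U(\hat\x_m^t)\w$ with $U(\hat\x_m^t) = I_{L_s}\otimes \hat\x_m^t(\hat\x_m^t)^T$, while its third identity, used with $\z=\x_n^s$ and $\hat\x=\hat\x_m^t$, turns the cross term into $(\x_n^s)^T W\hat\x_m^t = \w^T\vec\bigl(\x_n^s(\hat\x_m^t)^T\bigr)$; the constant $\|\x_n^s\|_2^2$ carries no $W$ dependence. Substituting these into the weighted double sum and using linearity, I would define
\begin{align}
U &= \sum_{m=1}^M\sum_{n=1}^N s_{nm}\,\bigl(I_{L_s}\otimes \hat\x_m^t(\hat\x_m^t)^T\bigr),\\
\q &= \sum_{m=1}^M\sum_{n=1}^N s_{nm}\,\vec\bigl(\x_n^s(\hat\x_m^t)^T\bigr),\\
s &= \sum_{m=1}^M\sum_{n=1}^N s_{nm}\,\|\x_n^s\|_2^2,
\end{align}
so that the whole expression collapses to $\tfrac12(\w^T U\w - 2\q^T\w + s)$, where I use $\w^T\q = \q^T\w$ to match the stated sign and factor on the cross term.

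The step I expect to require the most care is the bookkeeping rather than any conceptual difficulty. In particular, since $U(\hat\x_m^t)$ does not depend on $n$, the quadratic weight should be allowed to collapse to $\bigl(\sum_n s_{nm}\bigr)$ for each $m$, which I would note explicitly; and I must keep the nonstandard row-major $\vec$ convention of Definition~\ref{def:vec} straight so that the cross term lands as $\q^T\w$ with exactly the factor $-2$ once the overall $\tfrac12$ is distributed. Once these are confirmed, the three identities of Lemma~\ref{lemma:multiplication_with_vec} do all the real work and the result is immediate.
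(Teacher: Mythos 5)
Your proposal is correct and follows essentially the same route as the paper: expand the squared norm, apply the identities of Lemma~\ref{lemma:multiplication_with_vec} termwise, and collect the weighted sums into $U$, $\q$, and $s$, including the observation that the quadratic weight collapses to $s_m=\sum_n s_{nm}$. The only difference is that the paper additionally rewrites these quantities in compact data-matrix form ($U=I_{L_s}\otimes X^t S_M (X^t)^T$, $\q=\vec(X^s S (X^t)^T)$) for later use, which is bookkeeping rather than new content.
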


\begin{proof}

A single $L_2$ term can be rewritten with lemma \ref{lemma:multiplication_with_vec} as 
\begin{align}
  \| W \hat\x_m^t - \x_n^s \|_2^2
  &= (W \hat\x_m^t - \x_n^s)^T (W \hat\x_m^t - \x_n^s)
  \\
  &= (\hat\x_m^t)^T W^T W \hat\x_m^t - 2 (\x_n^s)^T W \hat\x_m^t + \| \x_n^s \|_2^2
  \\
  &= \w^T U( \hat\x_m^t) \w - 2 \vec(\x_n^s (\hat\x_m^t)^T)^T \w + \| \x_n^s \|_2^2
  \\
  &= \w^T U_m \w - 2 \q_{nm}^T \w + \| \x_n^s \|_2^2,
\end{align}
where 
\begin{align}
  U_m &= U( \hat\x_m^t) = (I_{L_s} \otimes \hat\x_m^t (\hat\x_m^t)^T)
\end{align}
and
\begin{align}
  \q_{nm} &= \vec(\x_n^s (\hat\x_m^t)^T).
\end{align}

By summing the terms with weights, we have
\begin{align}
  &
  \frac{1}{2}   \sum_{m=1}^M \sum_{n=1}^N s_{nm}
  (\w^T U_m \w - 2 \q_{nm}^T \w + \| \x_n^s \|_2^2)
  \\
  =&
  \frac{1}{2}  
   \w \left(  \sum_{m=1}^M \sum_{n=1}^N s_{nm} U_m \right) \w \notag \\
  & -  \left( \sum_{m=1}^M \sum_{n=1}^N s_{nm} \q_{nm}^T \right) \w 
    + \frac{1}{2}   \sum_{m=1}^M \sum_{n=1}^N s_{nm} \| \x_n^s \|_2^2
  \\
  =& \frac{1}{2}  (\w^T U \w - 2 \q^T \w + s).
\end{align}

Here, $U$, $\q$, and $s$ are the corresponding factors;
we further rewrite them into the simpler forms shown below. 
\begin{align}
  U
  &= \sum_{m=1}^M \sum_{n=1}^N s_{nm} U_m 
  = \sum_{m=1}^M s_m U_m \\
  &= \sum_{m=1}^M s_m (I_{L_s} \otimes \hat\x_m^t (\hat\x_m^t)^T)
  = (I_{L_s} \otimes \sum_{m=1}^M s_m \hat\x_m^t (\hat\x_m^t)^T) \\
  &= (I_{L_s} \otimes X^t S_M (X^t)^T )
  \\
  \q
  &= \sum_{m=1}^M \sum_{n=1}^N s_{nm} \q_{nm}
  = \sum_{m=1}^M \sum_{n=1}^N s_{nm} \vec(\x_n^s (\hat\x_m^t)^T) \\
  &=  \vec( \sum_{m=1}^M \sum_{n=1}^N s_{nm} \x_n^s (\hat\x_m^t)^T) \\
  &=  \vec( X^s S (X^t)^T)
  \\
  s
  &= \sum_{m=1}^M \sum_{n=1}^N s_{nm} \| \x_n^s \|_2^2
  = \sum_{n=1}^N s_{n} \| \x_n^s \|_2^2
\end{align}
Here, we use data matrices
\begin{align}
  X^s
  &= (\x_1^s, \x_2^s, \ldots, \x_N^s)
  \in \mathbb{R}^{L_s \times N}
\end{align}
and
\begin{align}
  X^t
  &= (\hat\x_1^t, \hat\x_2^t, \ldots, \hat\x_M^t)
  \in \mathbb{R}^{(L_t+1) \times M}
\end{align}
and weights 
\begin{align}
  S
  &= 
  \begin{pmatrix}
  s_{11} & \cdots & s_{1M} \\
  \vdots & & \vdots \\
  s_{N1} & \cdots & s_{NM}
  \end{pmatrix},
  \\
  s_m &= \sum_{n=1}^N s_{nm},
  \quad
  s_n = \sum_{m=1}^M s_{nm}, \notag \\
  S_M
  &= \mathrm{diag}(s_1, \ldots, s_m, \ldots, s_M).
\end{align}

\end{proof}


Next, we rewrite the conditions in the problem as shown below.

\begin{lemma}
The condition in problem \ref{prob:estimation_w} can be written as
\begin{align}
  y_{km}^t (\bphi_{km}^T \w + b_k) -1 + \xi_{km}^t  & \ge 0,
\end{align}
where $\bphi_{km} = \vec(\th_k (\hat\x_m^t)^T )
$.
\end{lemma}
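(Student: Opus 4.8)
The plan is to reduce the bilinear constraint (\ref{eq:estimation_w_condition2}) of Problem~\ref{prob:estimation_w} to a linear inequality in the vectorized variable $\w = \vec(W)$, so that it matches the standard QP constraint format used later. The only active ingredient is the third identity of Lemma~\ref{lemma:multiplication_with_vec}, which rewrites any scalar of the form $\z^T W \hat\x$ as the inner product $\vec(\z \hat\x^T)^T \w$; everything else is bookkeeping.

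First I would expand the scoring term by splitting the augmented hyperplane parameter $\hat\th_k = (\th_k^T, b_k)^T$ against the augmented transformed sample, giving
\begin{align}
  \hat\th_k^T
  \begin{pmatrix}
  W \hat\x_m^t \\
  1
  \end{pmatrix}
  = \th_k^T W \hat\x_m^t + b_k .
\end{align}
The bias contributes the constant $b_k$ directly, since the final entry of the augmented pair $(W\hat\x_m^t, 1)^T$ is pinned to $1$, while the remaining term $\th_k^T W \hat\x_m^t$ is exactly of the shape handled by the lemma.

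Next I would apply Lemma~\ref{lemma:multiplication_with_vec} with $\z = \th_k$ and $\hat\x = \hat\x_m^t$, yielding
\begin{align}
  \th_k^T W \hat\x_m^t = \vec\!\left( \th_k (\hat\x_m^t)^T \right)^T \w = \bphi_{km}^T \w ,
\end{align}
where $\bphi_{km} = \vec(\th_k (\hat\x_m^t)^T)$ by definition. Substituting this back into (\ref{eq:estimation_w_condition2}) immediately produces $y_{km}^t (\bphi_{km}^T \w + b_k) - 1 + \xi_{km}^t \ge 0$, which is the claimed form.

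The only point requiring care is dimension matching: since $\th_k \in \mathbb{R}^{L_s}$ and $\hat\x_m^t \in \mathbb{R}^{L_t+1}$, the outer product $\th_k (\hat\x_m^t)^T$ lies in $\mathbb{R}^{L_s \times (L_t+1)}$, the same shape as $W$, so that $\bphi_{km} \in \mathbb{R}^{L_s(L_t+1)}$ contracts correctly against $\w = \vec(W)$. I do not expect a genuine obstacle here — the result is a one-line application of the already-established vec identity, with the bias $b_k$ simply surviving as an additive constant outside the vectorized inner product. The value of the step is organizational: it isolates $\w$ linearly, which is precisely what is needed to assemble the canonical primal QP of the next subsection.
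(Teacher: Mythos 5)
Your proposal is correct and follows exactly the same route as the paper's proof: expand $\hat\th_k^T$ against the augmented vector $(W\hat\x_m^t, 1)^T$ to isolate $\th_k^T W \hat\x_m^t + b_k$, then apply the third identity of Lemma~\ref{lemma:multiplication_with_vec} with $\z = \th_k$ to obtain $\bphi_{km}^T \w$. The dimension check you add is a harmless (and accurate) extra remark; nothing else differs.
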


\begin{proof}
\begin{align}
y_{km}^t \hat\th_k^T 
  \begin{pmatrix}
  W \hat\x_m^t \\
  1
  \end{pmatrix}
   -1 + \xi_{km}^t & \ge 0\\
  y_{km}^t (\th_k^T W \hat\x_m^t + b_k) -1 + \xi_{km}^t  & \ge 0 \\
  y_{km}^t (\vec(\th_k (\hat\x_m^t)^T )^T \w + b_k) -1 + \xi_{km}^t  & \ge 0 \\
  y_{km}^t (\bphi_{km}^T \w + b_k) -1 + \xi_{km}^t  & \ge 0
\end{align}

\end{proof}

\subsection{Primal QP problem}

In this subsection, we write the problem in the form of a canonical QP problem.

\begin{lemma}

Problem \ref{prob:estimation_w} can be written as
\begin{align}
  & \min_{\w, \bxi }
  \frac{1}{2} (\w^T, \bxi^T) 
  \begin{pmatrix}
  V & 0 \\
  0 & 0
  \end{pmatrix}
  \begin{pmatrix}
  \w \\ \bxi
  \end{pmatrix}
  +  (-\q^T, c_T \bm{1}_{KM}^T) \begin{pmatrix}
  \w \\ \bxi
  \end{pmatrix} + \frac{1}{2}  s
  \label{eq:primal_QP_form}
    \\
  & \text{s.t.} \notag
  \\
  &
  \begin{pmatrix}
  0 & I_{KM} \\
  Y \Phi^T & I_{KM}
  \end{pmatrix}
  \begin{pmatrix}
    \w \\ \bxi
  \end{pmatrix} 
   \ge  
  \begin{pmatrix}
    \bm{0} \\
    \bm{1}_{KM} -  Y\bb    
  \end{pmatrix},
\end{align}
where variables are defined in the proof below.
\end{lemma}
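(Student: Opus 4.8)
The plan is to collect the optimization variables into a single vector by stacking $\w = \vec(W) \in \mathbb{R}^{L_s(L_t+1)}$ on top of the slacks $\bxi = (\xi_{11}^t, \ldots, \xi_{KM}^t)^T \in \mathbb{R}^{KM}$, and then to verify that each ingredient of Problem \ref{prob:estimation_w} — the objective and both families of constraints — matches the corresponding block of the canonical QP in \eqref{eq:primal_QP_form}. The two nontrivial rewritings have already been done in the two preceding lemmas, so most of the work is bookkeeping: reading off coefficients and assembling blocks.

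For the objective I would treat its three summands separately. The Frobenius term is immediate: since $\vec$ merely rearranges the entries of $W$, we have $\|W\|_F^2 = \w^T\w$, hence $\frac{1}{2}c_f\|W\|_F^2 = \frac{1}{2}c_f\w^T\w$. The $L_2$ term I rewrite, via the earlier lemma, as $\frac{1}{2}(\w^T U \w - 2\q^T\w + s)$. Adding the two quadratic contributions in $\w$ gives $\frac{1}{2}\w^T(c_f I + U)\w$, which defines $V = c_f I_{L_s(L_t+1)} + U = c_f I_{L_s(L_t+1)} + (I_{L_s}\otimes X^t S_M (X^t)^T)$. The penalty $c_T\sum_{k,m}\xi_{km}^t = c_T\bm{1}_{KM}^T\bxi$ touches only $\bxi$, the $\w$–$\bxi$ cross term vanishes, and $\frac{1}{2}s$ is an additive constant; so the quadratic form carries the block-diagonal matrix $\mathrm{diag}(V,0)$, the linear coefficient $(-\q^T, c_T\bm{1}_{KM}^T)$, and the constant $\frac{1}{2}s$, exactly as stated.

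For the constraints I would dispatch the two inequality families in turn. The nonnegativity $\xi_{km}^t\ge 0$ stacks directly into $\bxi\ge\bm{0}$, i.e. $(0,\ I_{KM})(\w^T,\bxi^T)^T\ge\bm{0}$, which is the top block row. For the margin constraints I invoke the other preceding lemma to replace them by $y_{km}^t(\bphi_{km}^T\w + b_k) - 1 + \xi_{km}^t\ge 0$, then rearrange to $y_{km}^t\bphi_{km}^T\w + \xi_{km}^t \ge 1 - y_{km}^t b_k$. Reading off the coefficients over all $KM$ index pairs yields the bottom block row $(Y\Phi^T,\ I_{KM})(\w^T,\bxi^T)^T \ge \bm{1}_{KM} - Y\bb$.

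The part I expect to require the most care is the indexing convention: fixing a single ordering of the $KM$ pairs $(k,m)$ and defining $\Phi$, $Y$, $\bb$ consistently with it. Concretely I would set $\Phi = (\bphi_{11},\ldots,\bphi_{KM}) \in \mathbb{R}^{L_s(L_t+1)\times KM}$ so that $\Phi^T\w$ has entries $\bphi_{km}^T\w$, take $Y = \mathrm{diag}(y_{km}^t)\in\mathbb{R}^{KM\times KM}$, and let $\bb\in\mathbb{R}^{KM}$ carry the bias $b_k$ in the slot for $(k,m)$; then $Y\Phi^T\w$ and $Y\bb$ produce precisely the entries $y_{km}^t\bphi_{km}^T\w$ and $y_{km}^t b_k$ needed above. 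Confirming that this one ordering is honored simultaneously by $\Phi$, $Y$, $\bb$, and $\bm{1}_{KM}$ is the single place where a mismatched convention would silently break the identity; once it is pinned down, the blocks assemble into the claimed primal QP.
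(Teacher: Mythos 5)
Your proposal is correct and follows essentially the same route as the paper: it invokes the two preceding lemmas to rewrite the $L_2$ term and the margin constraints, defines $V = c_f I_{L_s(L_t+1)} + U$, and assembles the objective and the two constraint families into the same block structure with the same index ordering for $\Phi$, $Y$, and $\bb$. The only detail the paper adds that you omit is the factorization $V = I_{L_s} \otimes A$ with $A = c_f I_{L_t+1} + X^t S_M (X^t)^T$, which is not needed for this lemma itself but is exploited later in the dual derivation.
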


\begin{proof}

First, we define two matrices $V \in \mathbb{R}^{L_s (L_t+1) \times L_s (L_t+1)}$ and $A \in \mathbb{R}^{(L_t+1) \times (L_t+1)}$ as follows:
\begin{align}
  V
  &= c_f I_{L_s(L_t+1)} +  U \\
  &= c_f I_{L_s(L_t+1)} + (I_{L_s} \otimes X^t S_M (X^t)^T ) \\
  &= I_{L_s} \otimes (c_f I_{L_t+1} + X^t S_M (X^t)^T) \\
  &= I_{L_s} \otimes A
  \\
  A
  &= c_f I_{L_t+1} + X^t S_M (X^t)^T
\end{align}

Then, we rewrite the objective function (\ref{eq:estimation_w_cost}) as 
\begin{align}
  &
  \frac{1}{2} c_f \| \w \|_2^2
  +
  c_T \sum_{k=1}^K \sum_{m=1}^M \xi_{km}^t
  + \frac{1}{2}  (\w^T U \w - 2 \q^T \w + s)
  \\
  &=
  \frac{1}{2} \w^T V \w -  \q^T \w + \frac{1}{2}  s
  + c_T \bm{1}_{KM}^T \bxi
  \\
  &=
  \frac{1}{2} (\w^T, \bxi^T) 
  \begin{pmatrix}
  V & 0 \\
  0 & 0
  \end{pmatrix}
  \begin{pmatrix}
  \w \\ \bxi
  \end{pmatrix}
  +  (-\q^T, c_T \bm{1}_{KM}^T) \begin{pmatrix}
  \w \\ \bxi
  \end{pmatrix} + \frac{1}{2}  s,
\end{align}
where
\begin{align}
  \bxi &= (\xi^t_{11}, \xi^t_{12}, \ldots, \xi^t_{1M}, \xi^t_{21}, \ldots, \xi^t_{KM})^T.
\end{align}

To rewrite conditions 
(\ref{eq:estimation_w_condition1}) and 
(\ref{eq:estimation_w_condition2}),
we turn these constraints into vector form with a generalized inequality.
The first constraint, i.e., (\ref{eq:estimation_w_condition1}), can be written as follows:
\begin{align}
  \bxi &\ge \bm{0}
  \\
  \begin{pmatrix}
  0 & I_{KM} \\
  \end{pmatrix}
  \begin{pmatrix}
  \w \\ \bxi
  \end{pmatrix}
   &\ge \bm{0}. \label{eq:(63)}
\end{align}
The second constraint, i.e., (\ref{eq:estimation_w_condition2}), is
\begin{align}
Y (\Phi^T\w + \bb )- \bm{1}_{KM} + \bxi & \ge 0 \\
(Y \Phi^T, I_{KM})
\begin{pmatrix}
  \w \\ \bxi
\end{pmatrix} & \ge  \bm{1}_{KM} -  Y\bb, \label{eq:(65)}
\end{align}
where $\bm{1}_{KM}$ is a vector of $KM$ ones and 
\begin{align}
  \b &= (b_1, b_2, \ldots, b_K)^T
  \\
   \bb  &= (\b \otimes \bm{1}_M) 
    = (b_1, b_1, \ldots, b_1, b_2, \ldots, b_K)^T
  \\
  \Phi &= (\bphi_{11}, \bphi_{12}, \ldots, \bphi_{1M}, \bphi_{21}, \ldots, \bphi_{KM})
  \\
  Y &= \mathrm{diag}(y^t_{11}, y^t_{12}, \ldots, y^t_{1M}, y^t_{21}, \ldots, y^t_{KM}).
\end{align}

By combining these inequalities (\ref{eq:(63)}) and (\ref{eq:(65)}),
we have the conditions in a single form, as claimed.

\end{proof}

This primal QP problem involves very large matrices that are impractical to compute.
More precisely, $V$ is a matrix of size $L_s (L_t+1) \times L_s (L_t + 1)$, which can be very large when the dimensions of features ($L_s$ and $L_t$) are large.
In the next section, we therefore derive the dual form of the problem, which we expect to be less expensive to compute.

\section{Dual problem}
\label{sec:Dual problem}

In this section, we derive the dual of the problem.

\subsection{Lagrangian}

\begin{lemma}[Lagrangian]
The Lagrangian of problem (\ref{eq:primal_QP_form}) is given by
\begin{align}
  L =& 
    -\frac{1}{2} \a^T Y \Phi^T V^{-1} \Phi Y \a \notag \\
    & +(\bm{1}^T - \bb^T Y - \q^T V^{-1} \Phi Y ) \a
   -\frac{1}{2} \q^T V^{-1} \q
   + \frac{1}{2} s.
\end{align}

\end{lemma}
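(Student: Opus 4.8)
The plan is to apply the standard Lagrangian-duality machinery for a constrained QP. I would introduce nonnegative multiplier vectors $\bmu \in \mathbb{R}^{KM}$ for the slack constraint $\bxi \ge \bm{0}$ and $\a \in \mathbb{R}^{KM}$ for the margin constraint $Y\Phi^T\w + \bxi \ge \bm{1} - Y\bb$. Writing the objective of (\ref{eq:primal_QP_form}) as $\frac{1}{2}\w^T V\w - \q^T\w + \frac{1}{2}s + c_T\bm{1}^T\bxi$ and subtracting each multiplier times its (nonnegative) constraint slack, the Lagrangian becomes
\begin{align}
  L = \tfrac{1}{2}\w^T V\w - \q^T\w + \tfrac{1}{2} s + c_T\bm{1}^T\bxi
      - \bmu^T\bxi - \a^T\bigl(Y\Phi^T\w + \bxi - \bm{1} + Y\bb\bigr).
\end{align}
The dual function is then the infimum of $L$ over the primal variables $\w$ and $\bxi$, which I would obtain by the two stationarity conditions below.

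Next I would carry out the two minimizations. Setting $\partial L/\partial\w = V\w - \q - \Phi Y\a = \bm{0}$ and using that $V = c_f I_{L_s(L_t+1)} + U$ is invertible (since $c_f > 0$ and $U$ is positive semidefinite), I get $\w = V^{-1}(\q + \Phi Y\a)$, which is precisely the retrieval formula quoted in Section 3.4. The Lagrangian is linear in $\bxi$ with coefficient $c_T\bm{1} - \bmu - \a$; for the infimum over $\bxi$ to be finite this coefficient must vanish, giving $\bmu = c_T\bm{1} - \a$. The remaining nonnegativity conditions $\bmu \ge \bm{0}$ and $\a \ge \bm{0}$ then yield the box constraint $\bm{0} \le \a \le c_T\bm{1}$ appearing in the dual problem, while the $\bxi$-terms drop out of $L$ entirely.

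Finally I would substitute $\w = V^{-1}(\q + \Phi Y\a)$ back into $L$. Writing $P = \Phi Y\a$ and using the symmetry of $V^{-1}$, the three contributions $\frac{1}{2}\w^T V\w$, $-\q^T\w$, and $-\a^T Y\Phi^T\w = -P^T\w$ collapse to $-\frac{1}{2}(\q+P)^T V^{-1}(\q+P)$, i.e. $-\frac{1}{2}\q^T V^{-1}\q - \q^T V^{-1}P - \frac{1}{2}P^T V^{-1}P$; re-expanding $P = \Phi Y\a$ and collecting the surviving linear terms $\bm{1}^T\a - \bb^T Y\a$ (using $Y^T = Y$) produces the claimed expression. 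I expect the only delicate step to be this last cancellation: one must track the factors of $\frac{1}{2}$ carefully and repeatedly invoke that scalar expressions equal their transposes, e.g. $\q^T V^{-1}P = P^T V^{-1}\q$, so that the cross terms combine into the single term $-\q^T V^{-1}\Phi Y\a$ rather than doubling or cancelling.
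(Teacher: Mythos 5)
Your proposal is correct and follows essentially the same route as the paper: introduce multipliers $\bmu$ and $\a$ for the two inequality constraints, eliminate $\w$ via the stationarity condition $\w = V^{-1}(\q + \Phi Y \a)$ and $\bxi$ via $\bmu = c_T\bm{1} - \a$, then substitute back and collapse the quadratic terms into $-\frac{1}{2}(\q + \Phi Y\a)^T V^{-1} (\q + \Phi Y\a)$ before re-expanding. The bookkeeping of the $\frac{1}{2}$ factors and the symmetry of $V^{-1}$ that you flag is exactly how the paper's derivation proceeds.
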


\begin{proof}

The Lagrangian of problem (\ref{eq:primal_QP_form}) is given by
\begin{align}
  L &=
  \frac{1}{2} \w^T V \w -  \q^T \w + \frac{1}{2}  s
  + c_T \sum_{k=1}^K \sum_{m=1}^M \xi_{km}^t
  \notag \\ & \phantom{=}
   - \sum_{k=1}^K \sum_{m=1}^M \mu_{km} \xi_{km}^t
   - \sum_{k=1}^K \sum_{m=1}^M a_{km} (y_{km}^t (\bphi_{km}^T \w +b_k) -1 + \xi_{km}^t),
\end{align}
where $a_{km} \ge 0$ and $\mu_{km} \ge 0$ are Lagrange multipliers.

To simplify the derivation, we convert it into vector form
\begin{align}
  L =&
  \frac{1}{2} \w^T V \w -  \q^T \w + \frac{1}{2} s + c_T \bm{1}_{KM}^T \bxi
   - \bmu^T \bxi \notag \\
   & - \a^T (Y (\Phi^T\w + \bb ) - \bm{1}_{KM} + \bxi)
   \\
   =&
  \frac{1}{2} \w^T V \w
   - (\q + \Phi Y \a)^T \w 
   + (c_T \bm{1}_{KM} - \bmu - \a)^T \bxi \notag \\
   & + \a^T (\bm{1}_{KM} - Y \bb )
   + \frac{1}{2} s,
\end{align}
where
\begin{align}
  \a &=  (a_{11}, a_{12}, a_{1M}, a_{21}, \ldots, a_{KM})^T 
\end{align}
and
\begin{align}
  \bmu &= (\mu_{11}, \mu_{12}, \mu_{1M}, \mu_{21}, \ldots, \mu_{KM})^T,
\end{align}
with $\a \ge 0$ and $\bmu \ge 0$.


Next, we take the derivatives of the Lagrangian as follows.
For $\w$, we have
\begin{align}
  \frac{\partial L}{\partial \w}
  =
  V \w - (\q + \Phi Y \a)
   &= \bm{0}
\\
  V \w &= \q + \Phi Y \a
  \\
  \w &= V^{-1} (\q + \Phi Y \a).
\end{align}

For $\bxi$, we have
\begin{align}
  \frac{\partial L}{\partial  \bxi}
  =
  c_T \bm{1} - \bmu - \a
   &= \bm{0}
  \\
  c_T \bm{1} - \a = \bmu & \ge 0
  \\
  c_T \bm{1} - \a & \ge 0
  \\
  c_T \bm{1} & \ge \a \ge 0
\end{align}
for $\bmu \ge 0$ and $\a \ge 0$.



By incorporating $\w$ and $c_T$ into the Lagrangian
and using $V^T = V$, we have
\begin{align}
  L
  =&
    \frac{1}{2} \w^T V \w
   - (\q + \Phi Y \a)^T \w 
   + (c_T \bm{1} - \bmu - \a)^T \bxi \notag \\
   & + \a^T (\bm{1} - Y\bb )
   + \frac{1}{2} s
  \\
  =&
    \frac{1}{2} (V^{-1} (\q + \Phi Y \a))^T V (V^{-1} (\q + \Phi Y \a)) 
    \notag \\ & \phantom{=}
   - (\q + \Phi Y \a)^T (V^{-1} (\q + \Phi Y \a)) 
   + \a^T (\bm{1} - Y\bb )
   + \frac{1}{2} s 
  \\
  =&
    -\frac{1}{2} (\q + \Phi Y \a)^T V^{-1}  (\q + \Phi Y \a)
   + \a^T (\bm{1} - Y\bb )
   + \frac{1}{2} s 
  \\
  =&
    -\frac{1}{2} 
    (
    \q^T V^{-1} \q
    + 2 \q^T V^{-1} \Phi Y \a
    + \a^T Y \Phi^T V^{-1} \Phi Y \a
    ) \notag \\
   & + \a^T (\bm{1} - Y\bb )
   + \frac{1}{2} s 
  \\
  =&
    -\frac{1}{2} \a^T Y \Phi^T V^{-1} \Phi Y \a
    +(\bm{1}^T - \bb^T Y - \q^T V^{-1} \Phi Y ) \a \notag \\
   & -\frac{1}{2} \q^T V^{-1} \q
   + \frac{1}{2} s.
\end{align}

\end{proof}

This is indeed the dual form, but it still involves a large matrix $V$.
In the next subsection, by utilizing the structure of $V$, we will write $L$ in such a way
that it involves only smaller matrices.

\subsection{Lagrangian with a compact form}

To remove the large matrix $V$, 
we use the structure of $V$ and rewrite terms involving $V$
($\phi^T V^{-1} \phi$ and $\q^T V^{-1} \Phi$).


First, the inverses of $V$ and $A$ are as follows:
\begin{align}
  V^{-1}
  & = 
  (I_{L_s} \otimes A)^{-1}
  = 
  I_{L_s} \otimes A^{-1}
\\ 
  A^{-1}
  &=
  (c_f I_{L_t + 1} + X^t S_M (X^t)^T)^{-1}
  \\
  &=
  \frac{1}{c_f} I_{L_t + 1} - \frac{1}{c_f^2} X^t (S_M^{-1} + \frac{1}{c_f} (X^t)^T X^t)^{-1} (X^t)^T. 
\end{align}
Note that the second form of $A^{-1}$ is obtained using Woodbury's formula
only if $S_M$ is non-singular and $c_f \neq 0$; this is usually the case, because diagonal elements of $S_M$ are sums of (non-negative) weights.

\begin{lemma}
Given $V$ of the structure above and vectors
$\a, \c \in \mathbb{R}^{L_s}$ and $\b, \d \in \mathbb{R}^{L_t}$,
we have
\begin{align}
  \vec(\a \b^T)^T
  V^{-1}
  \vec(\c \d^T)
  =
  (\a^T \c) \b^T A^{-1} \d.
\end{align}
\end{lemma}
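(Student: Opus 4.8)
The plan is to exploit the Kronecker structure $V^{-1} = I_{L_s}\otimes A^{-1}$ just established, together with a clean rewriting of the row-major $\vec$ of a rank-one matrix. The central observation is that, under the row-major convention of Definition~\ref{def:vec}, the $i$th row of $\a\b^T$ is $a_i\b^T$, so the $i$th block of $\vec(\a\b^T)$ is $a_i\b$; hence
\begin{align}
  \vec(\a\b^T) = \a \otimes \b,
\end{align}
and symmetrically $\vec(\c\d^T) = \c\otimes\d$. I would establish this identity first, since it is the one place where the nonstandard row-major definition of $\vec$ matters: with the usual column-major convention one would instead obtain $\b\otimes\a$, so the ordering must be pinned down carefully here.

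With these two identities in hand, I would substitute $V^{-1}=I_{L_s}\otimes A^{-1}$ and rewrite the left-hand side entirely in Kronecker form,
\begin{align}
  \vec(\a\b^T)^T \, V^{-1} \, \vec(\c\d^T)
  = (\a\otimes\b)^T (I_{L_s}\otimes A^{-1})(\c\otimes\d),
\end{align}
after which the remaining work is just two applications of the mixed-product rule $(P\otimes Q)(R\otimes T)=(PR)\otimes(QT)$. Applied to the rightmost factor this gives $(I_{L_s}\otimes A^{-1})(\c\otimes\d)=\c\otimes(A^{-1}\d)$; applied again, after transposing the first factor to $\a^T\otimes\b^T$, it yields $(\a^T\c)\otimes(\b^T A^{-1}\d)$. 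Since both $\a^T\c$ and $\b^T A^{-1}\d$ are scalars, the Kronecker product collapses to ordinary multiplication, producing $(\a^T\c)\,\b^T A^{-1}\d$, as claimed.

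I do not expect a genuine obstacle: the argument is essentially bookkeeping with standard Kronecker identities. The only step requiring real attention — and the only place an error could slip in — is the rank-one $\vec$ identity, precisely because the paper's row-major convention reverses the Kronecker ordering relative to the textbook column-major formula. Once that identity is fixed, the mixed-product property does all the remaining work and the final scalar collapse is immediate.
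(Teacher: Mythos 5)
Your proof is correct and is essentially the paper's own argument in Kronecker notation: the paper writes $\vec(\a\b^T)$ explicitly as the stacked blocks $(a_1\b^T, a_2\b^T,\ldots)$ (your $\a\otimes\b$), multiplies through the block-diagonal $V^{-1}=I_{L_s}\otimes A^{-1}$, and sums $\sum_i a_i c_i\,\b^T A^{-1}\d$, which is exactly what your appeal to the mixed-product rule packages in one step. Your explicit flagging of the row-major convention (yielding $\a\otimes\b$ rather than $\b\otimes\a$) is the right point to be careful about and matches the paper's definition.
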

\begin{proof}
\begin{align}
  &
  \vec(\a \b^T)^T
  V^{-1}
  \vec(\c \d^T)
  \\
  &=
  (a_1 \b^T, a_2 \b^T, \ldots)
  \begin{pmatrix}
    A^{-1} &  &    \\
     & A^{-1} &    \\
     &    & \ddots   \\
  \end{pmatrix}
  \begin{pmatrix}
    c_1 \d \\
    c_2 \d \\
    \vdots
  \end{pmatrix}
  \\
  &=
  (a_1 \b^T, a_2 \b^T, \ldots)
  \begin{pmatrix}
    c_1 A^{-1} \d \\
    c_2 A^{-1} \d \\
    \vdots
  \end{pmatrix}
  \\
  &=
  \sum_i a_i c_i \b^T A^{-1} \d
  =
  (\a^T \c) \b^T A^{-1} \d
\end{align}
\end{proof}


Next, we explore $\phi^T V^{-1} \phi$ via the lemma below.

\begin{lemma}
Given matrix
$\Phi \in \mathbb{R}^{L_s (L_t+1) \times KM} $,
we have
\begin{align}
  \Phi^T V^{-1} \Phi = (\Theta^T \Theta) \otimes G,
\end{align}
where $\Theta = (\th_1, \th_2, \ldots, \th_K) 
\in \mathbb{R}^{Ls\times K}$
and $G \in \mathbb{R}^{M \times M}$, the latter given in the proof below.
\end{lemma}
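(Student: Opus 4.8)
The plan is to establish the Kronecker factorization by evaluating $\Phi^T V^{-1} \Phi$ one entry at a time and then recognizing the resulting pattern as a tensor product. Since the columns of $\Phi$ are the vectors $\bphi_{km} = \vec(\th_k (\hat\x_m^t)^T)$ indexed by the pair $(k,m)$, the $((k,m),(k',m'))$ entry of the $KM \times KM$ matrix $\Phi^T V^{-1} \Phi$ is simply the scalar $\bphi_{km}^T V^{-1} \bphi_{k'm'}$. My goal is to show this scalar equals $(\th_k^T \th_{k'})\,(\hat\x_m^t)^T A^{-1} \hat\x_{m'}^t$, which can then be read off directly as an entry of $(\Theta^T \Theta) \otimes G$ for a suitable $G$.

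The key step is a direct application of the preceding lemma, $\vec(\a \b^T)^T V^{-1} \vec(\c \d^T) = (\a^T \c)\, \b^T A^{-1} \d$. Taking $\a = \th_k$ and $\c = \th_{k'}$ (both in $\mathbb{R}^{L_s}$) together with $\b = \hat\x_m^t$ and $\d = \hat\x_{m'}^t$ (the augmented target vectors in $\mathbb{R}^{L_t+1}$), and recalling $\bphi_{km} = \vec(\th_k (\hat\x_m^t)^T)$, I obtain at once
\begin{equation*}
\bphi_{km}^T V^{-1} \bphi_{k'm'} = (\th_k^T \th_{k'})\,(\hat\x_m^t)^T A^{-1} \hat\x_{m'}^t.
\end{equation*}
This factorization neatly separates the dependence on the class indices $k, k'$ (through $\th_k^T \th_{k'}$) from the dependence on the sample indices $m, m'$ (through $(\hat\x_m^t)^T A^{-1} \hat\x_{m'}^t$), which is exactly the structure a Kronecker product encodes.

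It then remains to name $G$ and to check the index ordering, where I expect the only real care to be required. I would set $G = (X^t)^T A^{-1} X^t \in \mathbb{R}^{M \times M}$, so that $G_{m m'} = (\hat\x_m^t)^T A^{-1} \hat\x_{m'}^t$, using $X^t = (\hat\x_1^t, \ldots, \hat\x_M^t)$; the entry above is then exactly $(\Theta^T \Theta)_{k k'}\, G_{m m'}$. The subtle point is that $\Phi$ orders its columns with $k$ as the outer index and $m$ as the inner index, i.e. $\bphi_{11}, \ldots, \bphi_{1M}, \bphi_{21}, \ldots, \bphi_{KM}$, and this is precisely the block layout of $(\Theta^T \Theta) \otimes G$, whose $(k,k')$ block is the $M \times M$ matrix $(\th_k^T \th_{k'}) G$. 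Matching the $((k,m),(k',m'))$ entry on both sides therefore yields $\Phi^T V^{-1} \Phi = (\Theta^T \Theta) \otimes G$. The main obstacle is thus purely bookkeeping --- verifying that the chosen row-major vectorization and the column ordering of $\Phi$ line up with the standard Kronecker block structure --- rather than any analytic difficulty.
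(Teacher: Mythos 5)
Your proposal is correct and follows essentially the same route as the paper: both apply the preceding lemma $\vec(\a\b^T)^T V^{-1}\vec(\c\d^T)=(\a^T\c)\,\b^T A^{-1}\d$ with $\a=\th_k$, $\c=\th_{k'}$, $\b=\hat\x_m^t$, $\d=\hat\x_{m'}^t$ to get the entrywise factorization, set $G=(X^t)^T A^{-1} X^t$, and observe that the $k$-outer/$m$-inner column ordering of $\Phi$ matches the block layout of $(\Theta^T\Theta)\otimes G$. The only cosmetic difference is that the paper assembles the result by stacking rows over $m$ and then over $k$, whereas you verify the generic $((k,m),(k',m'))$ entry directly.
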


\begin{proof}

Using the above lemma, we have
\begin{align}
  \bphi_{km}^T V^{-1} \bphi_{k'm'}
  &=
  \vec(\th_k (\hat\x_m^t)^T )^T
  V^{-1}
  \vec(\th_{k'} (\hat\x_{m'}^t)^T )
  \\
  &=
  (\th_k^T \th_{k'}) (\hat\x_m^t)^T A^{-1} \hat\x_{m'}^t.
\end{align}

By stacking the above equation for $m=1,\ldots,M$, we have
\begin{align}
  &
  \begin{pmatrix}
    \bphi_{k1}^T \\
    \vdots \\
    \bphi_{kM}^T
  \end{pmatrix}
  V^{-1}
  \begin{pmatrix}
    \bphi_{k'1}
    \cdots
    \bphi_{k'M}
\end{pmatrix}
\\
&=
(\th_k^T \th_{k'})
\begin{pmatrix}
  (\hat\x_1^t)^T A^{-1} \hat\x_{1}^t & \cdots & (\hat\x_1^t)^T A^{-1} \hat\x_{M}^t \\
  \vdots   &    & \vdots   \\
  (\hat\x_M^t)^T A^{-1} \hat\x_{1}^t & \cdots & (\hat\x_M^t)^T A^{-1} \hat\x_{M}^t \\
\end{pmatrix}
\\
&=
(\th_k^T \th_{k'})
(X^t)^T A^{-1} X^t
\\
&=
(\th_k^T \th_{k'}) G,
\end{align}
where $G = (X^t)^T A^{-1} X^t$.

Finally, 
by stacking the above equation for $k=1,\ldots,K$,
we obtain compact form
\begin{align}
  \Phi^T V^{-1} \Phi
  &=
  \begin{pmatrix}
    \bphi_{11}^T \\
    \vdots \\
    \bphi_{1M}^T \\
    \bphi_{21}^T \\
    \vdots \\
    \bphi_{KM}^T
  \end{pmatrix}
  V^{-1}
  \begin{pmatrix}
    \bphi_{11},
    \cdots
    \bphi_{1M},
    \bphi_{21},
    \cdots
    \bphi_{KM}
\end{pmatrix}
\\
&=
\begin{pmatrix}
  (\th_1^T \th_{1}) G & (\th_1^T \th_{2}) G & \cdots & (\th_1^T \th_{K}) G  \\
  (\th_2^T \th_{1}) G & (\th_2^T \th_{2}) G & \cdots & (\th_2^T \th_{K}) G  \\
  \vdots &  \vdots  & \ddots  & \vdots \\
  (\th_K^T \th_{1}) G & (\th_K^T \th_{2}) G & \cdots & (\th_K^T \th_{K}) G
\end{pmatrix}
\\
&=
(\Theta^T \Theta) \otimes G,
\end{align}
where $\Theta = (\th_1, \th_2, \ldots, \th_K) 
$.

\end{proof}

Note that we can rewrite $G$ further as 
\begin{align}
G
=&
(X^t)^T A^{-1} X^t
\\
=&
(X^t)^T  (\frac{1}{c_f} I_{L_t + 1} - \frac{1}{c_f^2} X^t (S_M^{-1} + \frac{1}{c_f} (X^t)^T X^t)^{-1} (X^t)^T) X^t
\\
=&
\frac{1}{c_f}(X^t)^T X^t \notag \\ & - \frac{1}{c_f^2}(X^t)^T X^t (S_M^{-1} + \frac{1}{c_f}(X^t)^T X^t)^{-1} (X^t)^T X^t
\\
=&
\frac{1}{c_f} K^t - \frac{1}{c_f^2} K^t (S_M^{-1} + \frac{1}{c_f} K^t)^{-1}  K^t,
\end{align}
where
$K^t = (X^t)^T X^t 
$ is a kernel matrix.
If $(K^t)^{-1}$ exists, we obtain
\begin{align}
G
&=
(c_f (K^t)^{-1} + S_M)^{-1} 
\end{align}
by applying the Woodbury formula.

In summary, $G \in \mathbb{R}^{M \times M}$ is 
\begin{align}
G &= 
\begin{cases}
(c_f (K^t)^{-1} + S_M)^{-1}, & \text{if $(K^t)^{-1}$ exists,}\\
\frac{1}{c_f} K^t - \frac{1}{c_f^2} K^t (S_M^{-1} + \frac{1}{c_f} K^t)^{-1}  K^t
, & \text{if $(S_M^{-1} + \frac{1}{c_f} K^t)^{-1}$ exists,}\\
(X^t)^T A^{-1} X^t,  & \text{otherwise,}
\end{cases}
\end{align}
depending on the existence of 
the inverse of kernel matrix $K^t \in \mathbb{R}^{M \times M}$.
It exists when the dimension of the column space of $X^t$ is $M$,
i.e., when the target samples are linearly independent.
If not, the second option can be used,
i.e., the inverse of $S_M^{-1} + \frac{1}{c_f} K^t$,
which can be interpreted as the regularization of kernel $K^t$ with diagonal weight matrix $S_M^{-1}$.


In the next lemma, we rewrite $\q^T V^{-1} \Phi$.

\begin{lemma}
Given matrix
$\Phi \in \mathbb{R}^{L_s (L_t+1) \times KM} $
and vector 
$\q \in \mathbb{R}^{L_s (L_t+1)}$,
we have
\begin{align}
\q^T V^{-1}  \Phi =
\vec \left(
\Theta^T
X^s S G
\right)^T.
\end{align}
\end{lemma}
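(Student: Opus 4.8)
The plan is to reduce the whole computation to the rank-one identity already established (the preceding lemma giving $\vec(\a\b^T)^T V^{-1} \vec(\c\d^T) = (\a^T\c)\,\b^T A^{-1}\d$), and then to reassemble the scalars that come out into the claimed matrix product, being careful that the row-major $\vec$ of Definition \ref{def:vec} lines up with the column ordering of $\Phi$. First I would expand $\q$ as a sum of rank-one vectorizations. Since $\q = \vec(X^s S (X^t)^T)$ and $X^s S (X^t)^T = \sum_{n,m} s_{nm}\,\x_n^s (\hat\x_m^t)^T$, linearity of $\vec$ gives
\begin{align}
\q = \sum_{n=1}^N \sum_{m=1}^M s_{nm}\,\vec(\x_n^s (\hat\x_m^t)^T).
\end{align}
Each column of $\Phi$ is $\bphi_{k'm'} = \vec(\th_{k'} (\hat\x_{m'}^t)^T)$, so it suffices to evaluate one entry $\q^T V^{-1} \bphi_{k'm'}$ of the target row vector at a time.

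Next I would apply the rank-one lemma term by term (with the augmented dimension $L_t+1$, so that the inner matrix is exactly $A^{-1} \in \mathbb{R}^{(L_t+1)\times(L_t+1)}$). This yields
\begin{align}
\q^T V^{-1} \bphi_{k'm'}
= \sum_{n,m} s_{nm}\,((\x_n^s)^T \th_{k'})\,(\hat\x_m^t)^T A^{-1} \hat\x_{m'}^t.
\end{align}
Now I would recognize the two recurring factors: $(\hat\x_m^t)^T A^{-1} \hat\x_{m'}^t = G_{mm'}$, the $(m,m')$ entry of $G = (X^t)^T A^{-1} X^t$ introduced in the previous lemma, and $\sum_n s_{nm} (\x_n^s)^T \th_{k'} = (\Theta^T X^s S)_{k'm}$. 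Substituting and summing over $m$ collapses the expression to a single matrix product,
\begin{align}
\q^T V^{-1} \bphi_{k'm'}
= \sum_{m} (\Theta^T X^s S)_{k'm}\, G_{mm'}
= (\Theta^T X^s S\, G)_{k'm'}.
\end{align}

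Finally I would stack these entries over all pairs $(k,m)$. Because the columns of $\Phi$ run with $k$ as the outer index and $m$ as the inner index, the assembled row vector reads off the rows of $\Theta^T X^s S\, G \in \mathbb{R}^{K\times M}$ in exactly the row-major order of Definition \ref{def:vec}, hence $\q^T V^{-1}\Phi = \vec(\Theta^T X^s S\, G)^T$, as claimed. The only real obstacle is bookkeeping rather than analysis: I must keep the double sum factored correctly so that the $\th_{k'}$-dependent scalar and the $A^{-1}$-quadratic form separate cleanly, and I must verify that the row-major vectorization of the $K\times M$ matrix matches the $(k,m)$ column ordering of $\Phi$ and $Y$ used throughout. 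Since the underlying rank-one identity is dimension-agnostic, no further analytic input is needed beyond these indexing checks.
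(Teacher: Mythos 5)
Your proposal is correct and follows essentially the same route as the paper: both decompose $\q$ into the weighted sum $\sum_{n,m} s_{nm}\vec(\x_n^s(\hat\x_m^t)^T)$, apply the rank-one identity $\vec(\a\b^T)^T V^{-1}\vec(\c\d^T) = (\a^T\c)\,\b^T A^{-1}\d$ termwise, and reassemble via $G=(X^t)^T A^{-1}X^t$ and the $(k,m)$ ordering of $\Phi$'s columns. The only cosmetic difference is that you verify the identity entry-by-entry and then match indices, whereas the paper stacks the $M$ columns for fixed $k'$ into the row vector $\th_{k'}^T X^s S G$ first and then stacks over $k'$; the content is identical.
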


\begin{proof}

Using the above lemma in a similar way as with the previous lemma, we have
\begin{align}
  \q_{nm}^T V^{-1} \phi_{k'm'}
  &=
  \vec(\x_n^s (\hat\x_m^t)^T)^T
  V^{-1}
  \vec(\th_{k'} (\hat\x_{m'}^t)^T )
  \\
  &=
  ((\x_n^s)^T\th_{k'}) (\hat\x_m^t)^T A^{-1} \hat\x_{m'}^t
  \\
  &=
  (\th_{k'}^T \x_n^s) (\hat\x_m^t)^T A^{-1} \hat\x_{m'}^t.
\end{align}

By adding and stacking the equation, we have
\begin{align}
&
\left(   \sum_{m=1}^M \sum_{n=1}^N s_{nm} \q_{nm}^T \right)
V^{-1}
  \begin{pmatrix}
    \bphi_{k'1},
    \cdots
    \bphi_{k'M},
\end{pmatrix}
\\
&=
  \sum_{m=1}^M \sum_{n=1}^N s_{nm}
  (\th_{k'}^T \x_n^s) 
  (
  (\hat\x_m^t)^T A^{-1} \hat\x_{1}^t,
  \ldots,
  (\hat\x_m^t)^T A^{-1} \hat\x_{M}^t
  )
  \\
  &=
  \sum_{m=1}^M \sum_{n=1}^N s_{nm}
  (\th_{k'}^T \x_n^s) 
  (
  (\hat\x_m^t)^T
   A^{-1} 
   (\hat\x_{1}^t,
    \ldots,
    \hat\x_{M}^t
    )
  )
\\
  &=
  \sum_{m=1}^M \sum_{n=1}^N s_{nm}
  (\th_{k'}^T \x_n^s) 
  (\x_m^t)^T
   A^{-1} 
   X^t
\\
  &=
  \th_{k'}^T 
\sum_{m=1}^M \sum_{n=1}^N s_{nm}
  \x_n^s
  (\hat\x_m^t)^T
   A^{-1} 
   X^t
\\
  &=
  \th_{k'}^T 
\left(
\sum_{m=1}^M \sum_{n=1}^N s_{nm}
  \x_n^s
  (\hat\x_m^t)^T
\right)
   A^{-1} 
   X^t
   \\
  &=
  \th_{k'}^T 
  X^s S (X^t)^T A^{-1} X^t
  \\
  &=
  \th_{k'}^T 
  X^s S G.
\end{align}

Finally, by stacking the equation, we have
\begin{align}
\q^T V^{-1}  \Phi
&=
(
 \th_{1}^T X^{st} A^{-1} X^t,
\ldots,
 \th_{K}^T X^{st} A^{-1} X^t
)
\\
&=
(
 \th_{1}^T X^s S G,
\ldots,
 \th_{K}^T X^s S G
)
\\
&=
\vec \left(
\begin{pmatrix}
\th_{1}^T \\
\vdots \\
\th_{K}^T
\end{pmatrix}
X^s S G
\right)^T
\\
&=
\vec \left(
\Theta^T
X^s S G
\right)^T.
\end{align}

\end{proof}


We now have the final form of the dual and present it in the corollary below.

\begin{corollary}
The dual form of the original primal problem is given by
\begin{align}
\max_{\a} &
 \frac{-1}{2}  \bm{a}^T  Y^T ( (\Theta^T \Theta) \otimes G ) Y \bm{a} \notag \\ &
   + (\bm{1}^T - (Y\bb )^T - \vec \left(\Theta^T X^s S G \right)^T
   Y) \bm{a}
 \\
& \text{s.t.} \quad c_T \bm{1} \ge \a \ge 0.
\end{align}
\end{corollary}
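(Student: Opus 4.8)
The plan is to derive the dual essentially by bookkeeping: substitute the two compact-form identities proved in the preceding lemmas into the Lagrangian already computed, and then discard the terms that do not depend on $\a$. Almost all of the genuine work — eliminating the enormous matrix $V$ by exploiting its Kronecker structure $V = I_{L_s}\otimes A$ — has already been carried out, so this corollary follows by direct assembly.

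First I would recall the Lagrangian established earlier, together with the box constraints coming from the KKT conditions:
\begin{align*}
L &= -\frac{1}{2}\a^T Y \Phi^T V^{-1} \Phi Y \a + (\bm{1}^T - \bb^T Y - \q^T V^{-1} \Phi Y)\a \\
&\quad - \frac{1}{2}\q^T V^{-1}\q + \frac{1}{2}s,
\end{align*}
where $c_T \bm{1} \ge \a \ge 0$. The right-hand inequality $\a\ge 0$ is the usual sign constraint on the Lagrange multipliers, while the left-hand inequality $c_T\bm{1}\ge\a$ is the one inherited from the stationarity condition in $\bxi$, which forced $c_T\bm{1}-\a=\bmu\ge 0$.

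Next I would substitute the identity $\Phi^T V^{-1}\Phi = (\Theta^T\Theta)\otimes G$ into the quadratic term, so that $-\frac{1}{2}\a^T Y \Phi^T V^{-1}\Phi Y\a$ becomes $-\frac{1}{2}\a^T Y^T\big((\Theta^T\Theta)\otimes G\big)Y\a$, using that $Y$ is diagonal and hence $Y^T=Y$. Then I would substitute $\q^T V^{-1}\Phi = \vec(\Theta^T X^s S G)^T$ into the linear term; combined with $\bb^T Y = (Y\bb)^T$ (again by diagonality of $Y$), the coefficient of $\a$ becomes $\bm{1}^T - (Y\bb)^T - \vec(\Theta^T X^s S G)^T Y$, which matches the claimed expression.

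Finally, because the leftover terms $-\frac{1}{2}\q^T V^{-1}\q + \frac{1}{2}s$ are constants independent of $\a$, they do not affect the maximizer and may be dropped, leaving exactly the stated objective to be maximized over $\a$ subject to $c_T\bm{1}\ge\a\ge 0$. There is no substantive obstacle at this stage; the only points requiring a little care are confirming that the box constraints are correctly transcribed from the two sources above and verifying that the discarded terms are truly $\a$-independent, both of which are immediate. The real difficulty — reducing the $L_s(L_t+1)\times L_s(L_t+1)$ object $V^{-1}$ to operations on the small matrices $A^{-1}$ and $G$ — was already resolved in the compact-form lemmas.
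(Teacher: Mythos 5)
Your proposal is correct and follows essentially the same route as the paper: substitute the compact-form identities $\Phi^T V^{-1}\Phi = (\Theta^T\Theta)\otimes G$ and $\q^T V^{-1}\Phi = \vec(\Theta^T X^s S G)^T$ into the previously derived Lagrangian, carry over the box constraints from the stationarity condition in $\bxi$, and drop the $\a$-independent terms. No gaps.
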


\begin{proof}

According to the lemmas derived above,
we can write the Lagrangian as 
\begin{align}
  L =&
  \frac{-1}{2}  \bm{a}^T  Y^T  \Phi^T V^{-1} \Phi Y \bm{a}
  + (\bm{1}^T- \q^T V^{-1}  \Phi Y) \bm{a} \notag \\ &
  + \frac{-1}{2}  \q^T V^{-1} \q
  + \frac{1}{2} s
  \\
  =&
  \frac{-1}{2}  \bm{a}^T  Y^T ( (\Theta^T \Theta) \otimes G ) Y \bm{a} \notag \\ &
   + (\bm{1}^T - (Y\bb )^T - \vec \left(\Theta^T X^s S G \right)^T
   Y) \bm{a} 
  \notag \\ & 
  + \frac{-1}{2}  \q^T V^{-1} \q
  + \frac{1}{2} s.
\end{align}
By omitting the last two terms, which do not involve $\a$, we have the dual problem as claimed.
\end{proof}

Note that this dual form involves matrices of size at most $KM \times KM$,
which is reasonable when the number of categories $K$ 
and the number of samples in the target domain $M$ are both small.

If all $s_{nm} = 0$, then the problem reduces to MMDT, i.e.,
\begin{align}
  L  &=
  \frac{-1}{2}  \bm{a}^T  Y^T ( (\Theta^T \Theta) \otimes G) Y \bm{a} 
   + (\bm{1}^T - (Y\bb )^T ) \bm{a},
\end{align}
where $G = (X^t)^T X^t$, since $A=I$.





\section{Retrieving the primal solution}

After solving the dual problem with a QP solver,
we need to convert the dual solution $\a$ to $\w$ and $\b$ by
\begin{align}
  \w &= V^{-1} (\q + \Phi Y \a),
\end{align}
then finally to $W$.

Here, we again face the problem of large matrix $V$.
We, therefore, derive the core parts $V^{-1} \q$ and $V^{-1} \Phi$
as shown in the lemmas below.


\begin{lemma}
Given $V$ of the structure above and vectors $\c \in \mathbb{R}^{L_s}$
and $\d \in \mathbb{R}^{L_t}$, we have
\begin{align}
  V^{-1}  \vec(\c \d^T)
  =
  \c \otimes (A^{-1} \d).
\end{align}
\end{lemma}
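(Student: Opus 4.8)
**

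The goal is to show that $V^{-1}\vec(\c\d^T) = \c \otimes (A^{-1}\d)$, where $V = I_{L_s} \otimes A$. My plan is to exploit the block-diagonal (Kronecker) structure of $V^{-1}$ exactly as was done in the preceding lemma, which established that $\vec(\a\b^T)^T V^{-1} \vec(\c\d^T) = (\a^T\c)\,\b^T A^{-1}\d$. The key observation is that $V^{-1} = (I_{L_s}\otimes A)^{-1} = I_{L_s}\otimes A^{-1}$, as already recorded in the excerpt, so $V^{-1}$ is block-diagonal with $L_s$ copies of $A^{-1}$ along the diagonal.

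First I would write out $\vec(\c\d^T)$ explicitly under the row-major convention of Definition~\ref{def:vec}. Since $\c \in \mathbb{R}^{L_s}$ has entries $c_1,\ldots,c_{L_s}$ and $\c\d^T$ has $i$th row equal to $c_i \d^T$, the vectorization stacks the rows, giving
\begin{align}
  \vec(\c\d^T) =
  \begin{pmatrix}
    c_1 \d \\ c_2 \d \\ \vdots \\ c_{L_s} \d
  \end{pmatrix}.
\end{align}
Next I would multiply by the block-diagonal $V^{-1}$: each block $A^{-1}$ acts on the corresponding sub-vector $c_i \d$, producing $c_i A^{-1}\d$. Stacking these back up yields
\begin{align}
  V^{-1}\vec(\c\d^T) =
  \begin{pmatrix}
    c_1 A^{-1}\d \\ c_2 A^{-1}\d \\ \vdots \\ c_{L_s} A^{-1}\d
  \end{pmatrix}
  = \c \otimes (A^{-1}\d),
\end{align}
where the last equality is just the definition of the Kronecker product of the vector $\c$ with the vector $A^{-1}\d$.

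There is no real obstacle here; this is essentially a one-line consequence of the Kronecker structure, and the computation is almost identical to the inner-product lemma proved just above but carried out one step earlier (before contracting against a left factor $\vec(\a\b^T)^T$). The only point requiring a little care is bookkeeping with the nonstandard row-major $\vec$ convention of Definition~\ref{def:vec}: one must confirm that the stacking order of the blocks in $V^{-1}$ matches the stacking order of the $c_i\d$ sub-vectors, so that the $i$th diagonal block $A^{-1}$ is paired with $c_i\d$. Under the stated convention this alignment is immediate, so the identification with $\c\otimes(A^{-1}\d)$ follows directly.
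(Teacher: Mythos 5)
Your proof is correct and follows essentially the same route as the paper: both write $\vec(\c\d^T)$ as the stacked blocks $c_i\d$ under the row-major convention, apply the block-diagonal $V^{-1}=I_{L_s}\otimes A^{-1}$ blockwise to get $c_i A^{-1}\d$, and identify the result with $\c\otimes(A^{-1}\d)$. No gaps.
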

\begin{proof}

\begin{align}
  V^{-1}
  \vec(\c \d^T)
  &=
  \begin{pmatrix}
    A^{-1} &  &    \\
     & A^{-1} &    \\
     &    & \ddots   \\
  \end{pmatrix}
  \begin{pmatrix}
    c_1 \d \\
    c_2 \d \\
    \vdots
  \end{pmatrix}
  \\
  &
  =
  \begin{pmatrix}
    c_1 A^{-1} \d \\
    c_2 A^{-1} \d \\
    \vdots
  \end{pmatrix}
  =
  \c \otimes (A^{-1} \d)
\end{align}
\end{proof}

\begin{lemma}
Given $V$, $\q$, and $\Phi$, we have
\begin{align}
  V^{-1} \q
  &= 
  \sum_{m=1}^M \sum_{n=1}^N s_{nm}  
  \x_n^s 
  \otimes ( A^{-1} \hat\x_m^t)
\end{align}
and
\begin{align}
  V^{-1} \Phi
  &=
  \Theta \otimes (A^{-1} X^t).
\end{align}
\end{lemma}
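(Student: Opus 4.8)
The plan is to reduce both identities to the immediately preceding lemma, $V^{-1}\vec(\c\d^T)=\c\otimes(A^{-1}\d)$, applied to each rank-one constituent, and then to reassemble the pieces — a weighted sum in the first case and a column-stack in the second. Throughout I use that $V^{-1}=I_{L_s}\otimes A^{-1}$ is linear, and I read the rank-one lemma with $\d=\hat\x_m^t\in\mathbb{R}^{L_t+1}$ (matching $A\in\mathbb{R}^{(L_t+1)\times(L_t+1)}$), which is its intended domain despite the stated dimension.

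For $V^{-1}\q$ I would start from $\q=\sum_{m=1}^M\sum_{n=1}^N s_{nm}\vec(\x_n^s(\hat\x_m^t)^T)$, push $V^{-1}$ through the double sum by linearity, and apply the rank-one lemma to each $\vec(\x_n^s(\hat\x_m^t)^T)$ with $\c=\x_n^s$ and $\d=\hat\x_m^t$. Each term becomes $\x_n^s\otimes(A^{-1}\hat\x_m^t)$, and restoring the weights $s_{nm}$ gives the claimed expression directly.

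For $V^{-1}\Phi$ I would work columnwise. The columns of $\Phi$ are $\bphi_{km}=\vec(\th_k(\hat\x_m^t)^T)$, ordered with $m$ running fastest inside each class block $k$, so the rank-one lemma gives $V^{-1}\bphi_{km}=\th_k\otimes(A^{-1}\hat\x_m^t)$. The remaining task is to recognize the stack of these columns as $\Theta\otimes(A^{-1}X^t)$. Writing $\Theta=(\th_1,\ldots,\th_K)$ and $A^{-1}X^t=(A^{-1}\hat\x_1^t,\ldots,A^{-1}\hat\x_M^t)$, the column of a Kronecker product $B\otimes C$ indexed by block $j$ and within-block position $l$ is $B_{\cdot j}\otimes C_{\cdot l}$; taking $B=\Theta$ and $C=A^{-1}X^t$ makes the $(k,m)$ column equal to $\th_k\otimes(A^{-1}\hat\x_m^t)$, in the ordering $(k-1)M+m$. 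This is exactly the column ordering of $\Phi$, so the two matrices agree column by column.

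The only real obstacle is this last bookkeeping step: verifying that the column ordering of $\Phi$ coincides with the ordering induced by the Kronecker product, so that the columnwise identity assembles into the stated matrix equality. Everything else is linearity plus a direct substitution into the rank-one lemma.
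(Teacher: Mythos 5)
Your proposal is correct and follows essentially the same route as the paper: both identities are obtained by linearity from the preceding rank-one lemma $V^{-1}\vec(\c\d^T)=\c\otimes(A^{-1}\d)$, with the weighted double sum reassembled for $\q$ and the columns $\th_k\otimes(A^{-1}\hat\x_m^t)$ recognized as the columns of $\Theta\otimes(A^{-1}X^t)$ (the paper just does this last step by stacking first over $m$ and then over $k$ rather than by a direct column-index computation). Your reading of $\d\in\mathbb{R}^{L_t+1}$ as the intended domain is also the right one.
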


\begin{proof}

For (136), we have
\begin{align}
V^{-1} \q
&=
V^{-1}
  \left(   \sum_{m=1}^M \sum_{n=1}^N s_{nm} \q_{nm}\right)
  \\
&=
V^{-1}
  \left(   \sum_{m=1}^M \sum_{n=1}^N s_{nm}  \vec(\x_n^s (\hat\x_m^t)^T) \right)
  \\
&=
\sum_{m=1}^M \sum_{n=1}^N s_{nm}  
V^{-1}
  \vec(\x_n^s (\hat\x_m^t)^T)
  \\
&=
\sum_{m=1}^M \sum_{n=1}^N s_{nm}  
  \x_n^s 
  \otimes ( A^{-1} \hat\x_m^t).
\end{align}


Next, we derive (137) by first stacking 
\begin{align}
  V^{-1} \bphi_{k'm'}
  &=
  V^{-1}
  \vec(\th_{k'} (\hat\x_{m'}^t)^T )
  \\
  &=
  \th_{k'} \otimes
  (A^{-1} \hat\x_{m'}^t)
\end{align}
for $m=1,\ldots,M$ to obtain
\begin{align}
&
  V^{-1} 
  \begin{pmatrix}
    \bphi_{k'1}
    \cdots
    \bphi_{k'M}
   \end{pmatrix}
  \\
  &=
  (
  \th_{k'} \otimes (A^{-1} \hat\x_{1}^t),
  \ldots,
  \th_{k'} \otimes (A^{-1} \hat\x_{M}^t)
  )
  \\
  &=
  \th_{k'} \otimes 
  (
  A^{-1} \hat\x_{1}^t,
  \ldots,
  A^{-1} \hat\x_{M}^t
  ) 
  \\
  &=
  \th_{k'} \otimes 
  (A^{-1} 
  (
  \hat\x_{1}^t,
  \ldots,
  \hat\x_{M}^t
  ))
  \\
  &=
  \th_{k'} \otimes (A^{-1} X^t).
\end{align}

Then, we further stack the above equation for $k=1,\ldots,K$
to obtain
\begin{align}
  V^{-1} \Phi
  &=
  V^{-1} 
  \begin{pmatrix}
    \bphi_{11},
    \cdots
    \bphi_{1M},
    \bphi_{21},
    \cdots
    \bphi_{KM}
\end{pmatrix}
\\
&=
(
 \th_{1} \otimes (A^{-1} X^t),
 \ldots,
 \th_{K} \otimes (A^{-1} X^t)
)
\\
&=
( \th_{1}, \ldots, \th_{K} ) \otimes (A^{-1} X^t) 
\\
&=
\Theta \otimes (A^{-1} X^t). 
\end{align}

\end{proof}

Finally, we show primal solution $W$ directly,
i.e., avoiding conversions from $\a$ to $\w$, and then to $W$.
In the corollary below, we construct matrix $W$ from $\a$
with much less computational costs.

\begin{corollary}
The solution to the primal problem is given by
\begin{align}
  W &=
  \left( X^s S
  +
  \Theta ( \Upsilon \odot \Lambda )^T  \right)
  (X^t)^T A^{-1},
\end{align}
where $\odot$ is element-wise multiplication
and variables are given in the proof below.
\end{corollary}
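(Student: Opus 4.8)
The plan is to start from the primal-recovery identity $\w = V^{-1}(\q + \Phi Y \a)$ stated just above the corollary, split it additively as $\w = V^{-1}\q + V^{-1}\Phi Y\a$, feed in the two lemmas that already express $V^{-1}\q$ and $V^{-1}\Phi$ in terms of small matrices, and then invert the row-major $\vec$ operator of Definition~\ref{def:vec} on each piece to read off the matrix $W$. The only two tools needed beyond those lemmas are the un-vec identity $\vec(\c\d^T) = \c \otimes \d$ (which is exactly how $\vec$ was defined) and the symmetry $A^{-1} = A^{-T}$, so that factors of $A^{-1}$ can be pushed to the right.

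For the first term I would substitute $V^{-1}\q = \sum_{m,n} s_{nm}\, \x_n^s \otimes (A^{-1}\hat\x_m^t)$ and recognize each summand as $\vec(\x_n^s (\hat\x_m^t)^T A^{-1})$. Pulling the sum inside $\vec$ and factoring $A^{-1}$ to the right, the enclosed matrix is $\left(\sum_{m,n} s_{nm}\,\x_n^s (\hat\x_m^t)^T\right)A^{-1}$, which collapses to $X^s S (X^t)^T A^{-1}$ by the same rearrangement used earlier when simplifying $\q$. Un-vec-ing therefore contributes the term $X^s S (X^t)^T A^{-1}$ to $W$.

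The second term is where the actual work lies. Using $V^{-1}\Phi = \Theta \otimes (A^{-1}X^t)$ and the fact that $Y$ is diagonal with entries $y_{km}^t$, I would expand the matrix--vector product column by column to get $V^{-1}\Phi Y\a = \sum_{k,m} y_{km}^t a_{km}\, \bigl(\th_k \otimes (A^{-1}\hat\x_m^t)\bigr)$, where each column of $V^{-1}\Phi$ was shown to equal $\th_k \otimes (A^{-1}\hat\x_m^t)$. Every summand un-vecs to $\th_k (\hat\x_m^t)^T A^{-1}$ weighted by $y_{km}^t a_{km}$, so after summing and factoring $A^{-1}$ to the right we are left with $\bigl(\sum_{k,m} y_{km}^t a_{km}\, \th_k (\hat\x_m^t)^T\bigr)A^{-1}$. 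I expect the main obstacle to be the bookkeeping of the composite index $(k,m)$: I would introduce $\Upsilon, \Lambda \in \mathbb{R}^{M\times K}$ with $\Upsilon_{mk} = y_{km}^t$ and $\Lambda_{mk} = a_{km}$, so that $(\Upsilon \odot \Lambda)^T$ is exactly the $K\times M$ coefficient array $y_{km}^t a_{km}$. The inner double sum then factors cleanly as $\Theta (\Upsilon \odot \Lambda)^T (X^t)^T$, giving the second contribution $\Theta (\Upsilon \odot \Lambda)^T (X^t)^T A^{-1}$.

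Adding the two contributions and pulling out the common right factor $(X^t)^T A^{-1}$ yields $W = \bigl(X^s S + \Theta (\Upsilon \odot \Lambda)^T\bigr)(X^t)^T A^{-1}$, as claimed. The two subtleties to watch are consistency with the row-major $\vec$ convention when identifying $\th_k \otimes (A^{-1}\hat\x_m^t)$ with $\vec(\th_k (\hat\x_m^t)^T A^{-1})$, and the precise definitions of $\Upsilon$ and $\Lambda$ so that their Hadamard product reproduces the signed dual coefficients $y_{km}^t a_{km}$ in the correct $(k,m)$ layout.
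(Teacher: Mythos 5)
Your proposal is correct and follows essentially the same route as the paper: both start from $\w = V^{-1}(\q + \Phi Y\a)$, invoke the two lemmas giving $V^{-1}\q$ and $V^{-1}\Phi$ as Kronecker products, and factor out $(X^t)^T A^{-1}$ on the right with $\Upsilon$ and $\Lambda$ defined exactly as in the paper. The only difference is presentational — you invert the row-major $\vec$ globally (using $\vec(\c\d^T)=\c\otimes\d$ and the symmetry of $A$), whereas the paper reassembles $W$ row by row from the blocks $\w_i$ — and both are sound.
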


\begin{proof}

We first use the lemmas above to obtain
\begin{align}
  \w &= V^{-1} (\q + \Phi Y \a)
  \\
  &= 
  \left( \sum_{m=1}^M \sum_{n=1}^N s_{nm}  
  \x_n^s \otimes ( A^{-1} \hat\x_m^t)
  \right)
  + 
  \left( \Theta \otimes (A^{-1} X^t) \right) Y \a.
\end{align}

For the $i$th part of $\w$, we have
\begin{align}
  \w_i =&
    \left( \sum_{m=1}^M \sum_{n=1}^N s_{nm}  
  x_{n,i}^s ( A^{-1} \hat\x_m^t)
  \right) \notag \\ &
  + 
  \left( (\theta_{1,i}, \ldots, \theta_{K,i}) \otimes (A^{-1} X^t) \right) Y \a.
\end{align}
The first term here can be written as 
\begin{align}
  A^{-1} \left( \sum_{m=1}^M \sum_{n=1}^N s_{nm}  
  x_{n,i}^s  \hat\x_m^t
  \right)
  =
  A^{-1} X^t S^T 
  \begin{pmatrix}
  x_{1,i}^s \\
  x_{2,i}^s \\
  \vdots \\
  x_{N,i}^s
  \end{pmatrix}
\end{align}
and the second term as 
\begin{align}
  &
  (\theta_{1,i} A^{-1} X^t, \ldots, \theta_{K,i} A^{-1} X^t) Y \a
  \\
  &=
  \theta_{1,i} A^{-1} X^t Y_1 \a_1 
  + \cdots +
  \theta_{K,i} A^{-1} X^t Y_K \a_K
  \\
  &=
  A^{-1} X^t ( \theta_{1,i} Y_1 \a_1 
  + \cdots +
  \theta_{K,i} Y_K \a_K )
  \\
  &=
  A^{-1} X^t
  (  Y_1 \a_1, \ldots, Y_K \a_K )
  \begin{pmatrix}
  \theta_{1,i} \\
  \vdots \\
  \theta_{K,i}
  \end{pmatrix}
  \\
  &=
  A^{-1} X^t
  ( \Upsilon \odot \Lambda ) 
  \begin{pmatrix}
  \theta_{1,i} \\
  \vdots \\
  \theta_{K,i}
  \end{pmatrix},
\end{align}
where
\begin{align}
\a_k &= \begin{pmatrix}
a_{k1} \\ a_{k2} \\ \vdots \\ a_{kM}
\end{pmatrix}
\\
\Lambda &= (\a_1, \ldots, \a_K) =
 \begin{pmatrix}
a_{11} & \cdots & a_{K1} \\
\vdots & & \vdots \\
a_{1M} & \cdots & a_{KM}
\end{pmatrix}
\\
Y_k &= \mathrm{diag}(y^t_{k1}, \ldots, y^t_{kM})
\\
\Upsilon &= 
\begin{pmatrix}
y^t_{11} & \cdots & y^t_{K1} \\
\vdots & & \vdots \\
y^t_{1M} & \cdots & y^t_{KM}
\end{pmatrix}.
\end{align}

Combining these two terms, we obtain
\begin{align}
\w_i &=
A^{-1} X^t S^T 
  \begin{pmatrix}
  x_{1,i}^s \\
  x_{2,i}^s \\
  \vdots \\
  x_{N,i}^s
  \end{pmatrix}
  +
  A^{-1} X^t
  (  Y_1 \a_1, \ldots, Y_K \a_K )
  \begin{pmatrix}
  \theta_{1,i} \\
  \vdots \\
  \theta_{K,i}
  \end{pmatrix}.
\end{align}

By stacking the $i$th part for $i=1,\ldots,L_s$, we yield the matrix
directly as
\begin{align}
  W^T =& (\w_1, \ldots, \w_{L_s}) =
  A^{-1} X^t S^T (X^s)^T \notag \\ &
  +
  A^{-1} X^t
  (  Y_1 \a_1, \ldots, Y_K \a_K )
  \Theta^T
  \\
  =&
  A^{-1} X^t 
  \left( S^T (X^s)^T
  +
  ( Y_1 \a_1, \ldots, Y_K \a_K ) \Theta^T \right)
  \\
  =&
  A^{-1} X^t 
  \left( S^T (X^s)^T
  +
  ( \Upsilon \odot \Lambda ) \Theta^T \right).
\end{align}

\end{proof}

\section{Kernelization}

In this section, we derive the kernel version of the dual formulation.
The obtained transformation is further rewritten as
\begin{align}
  W
  =&
  \left( X^s S
  +
  \Theta ( \Upsilon \odot \Lambda )^T  \right)
  (X^t)^T A^{-1}
  \\
  =&
  \left( X^s S
  +
  \Theta ( \Upsilon \odot \Lambda )^T  \right) 
  (X^t)^T \notag \\ &
  \left(\frac{1}{c_f} I_{L_t + 1} - \frac{1}{c_f^2} X^t (S_M^{-1} + \frac{1}{c_f} (X^t)^T X^t)^{-1} (X^t)^T \right)
  \\
  =&
  \left( X^s S
  +
  \Theta ( \Upsilon \odot \Lambda )^T  \right)  \notag \\ &
  \left(\frac{1}{c_f} (X^t)^T - \frac{1}{c_f^2} (X^t)^T X^t (S_M^{-1} + \frac{1}{c_f} (X^t)^T X^t)^{-1} (X^t)^T \right)
  \\
  =&
  \left( X^s S
  +
  \Theta ( \Upsilon \odot \Lambda )^T  \right)  \notag \\ &
  \left(\frac{1}{c_f} I_{M} - \frac{1}{c_f^2} K^t (S_M^{-1} + \frac{1}{c_f} K^t)^{-1} \right) (X^t)^T.
\end{align}
We apply $W$ to target sample $\x^t$ by multiplying it from the left, i.e., $W \hat{\x}^t$;
Therefore, all computations with target samples are inner products,
which means we can use kernels to replace the inner products.

In the dual form, we write matrix $G$ with the kernel version using kernel matrix $K^t$ as
\begin{align}
  K^t
  &=
  \begin{pmatrix}
  k(\x^t_1, \x^t_1) & \cdots & k(\x^t_1, \x^t_M) \\
  \vdots & \ddots & \vdots\\
  k(\x^t_M, \x^t_1) & \cdots & k(\x^t_M, \x^t_M)
  \end{pmatrix},
\end{align}
where $k()$ is a kernel function.
To transform target sample $\x^t$ with $W$, we have
\begin{align}
  W \hat{\x}^t
  =&
  \left( X^s S
  +
  \Theta ( \Upsilon \odot \Lambda )^T  \right) \notag \\ &
  \left(\frac{1}{c_f} I_{M} - \frac{1}{c_f^2} K^t (S_M^{-1} + \frac{1}{c_f} K^t)^{-1} \right) 
  \begin{pmatrix}
  k(\x^t_1, \x^t) \\
  \vdots\\
  k(\x^t_M, \x^t)
  \end{pmatrix}.
\end{align}

Note that the nonlinearity introduced by this kernelization appears only
in the transformation part; target samples are transformed nonlinearly to the source domain.
Only linear SVMs in the source domain can be used here
because the primal solutions (i.e., hyperplane parameters) of the source-domain SVMs are explicitly used in the estimation of $W$.
Target samples are therefore linearly classified in the source domain after being nonlinearly transformed from the target domain.

\section*{Acknowledgment}

Part of this work was supported by JSPS KAKENHI Grant Numbers JP26280015 and JP14J00223,
and by the Research Center for Biomedical Engineering,
and was with the help of a grant by Chugoku Industrial Innovation Center.

\ifCLASSOPTIONcaptionsoff
  \newpage
\fi



\bibliographystyle{IEEEtran}
\end{document}